\def\ud{\mathrm d}
\newtheorem{theorem}{Theorem}[section]
\newtheorem{lemma}[theorem]{Lemma}
\newtheorem{proposition}[theorem]{Proposition}
\newtheorem{assumption}[theorem]{Assumption}
\newenvironment{proof}{\begin{trivlist}
    \item[\hskip\labelsep{\bf Proof.}]}{$\hfill\Box$\end{trivlist}}
\newenvironment{proofof}[1]{\begin{trivlist}
    \item[\hskip\labelsep{\bf Proof of {#1}.}]}{$\hfill\Box$\end{trivlist}}
{\theoremstyle{plain} \theorembodyfont{\rmfamily}
\newtheorem{example}[theorem]{Example}
}
\numberwithin{equation}{section}
\numberwithin{figure}{section}
\numberwithin{table}{section}
\newcommand{\e}{{\varepsilon}}
\newcommand{\norm}[1]{\left\|#1 \right\|}  
\newcommand{\abs}[1]{\left|#1\right|}  
\newcommand{\ip}[1]{\left\langle#1\right\rangle}
\newcommand{\cW}{{{\mathcal W}}}
\newcommand{\E}{{{\mathbb E}}}
      \newcommand{\TV}{{{\rm TV}}}
    \newcommand{\R}{{{\mathbb{R}}}}
    \newcommand{\mP}{{{\mathbb{P}}}}
\renewcommand{\hat}{\widehat}
\newcommand{\vertiii}[1]{{\left\vert\kern-0.25ex\left\vert\kern-0.25ex\left\vert #1 
\right\vert\kern-0.25ex\right\vert\kern-0.25ex\right\vert}}
\title{How to beat a Bayesian adversary}
\author{Zihan Ding\footnote{Department of Electrical and Computer Engineering, Princeton University, USA, \texttt{zihand@princeton.edu}} \and Kexin Jin\footnote{Department of Mathematics, Princeton University,  USA, \texttt{kexinj@math.princeton.edu}} \and Jonas Latz\footnote{Department of Mathematics, University of Manchester, UK, \texttt{jonas.latz@manchester.ac.uk}} \and Chenguang Liu\footnote{Delft Institute of Applied Mathematics, Technische Universiteit Delft, The Netherlands, \texttt{c.liu-13@tudelft.nl}} }
\begin{document}

\maketitle
\begin{abstract}  Deep neural networks and other modern machine learning models are often susceptible to  adversarial attacks. Indeed, an adversary may often be able to change a model's prediction through a small, directed perturbation of the model's input -- an issue in safety-critical applications. Adversarially robust machine learning is usually based on a minmax optimisation problem that minimises the machine learning loss under maximisation-based adversarial attacks.

In this work, we study adversaries that determine their attack using a Bayesian statistical approach rather than maximisation. The resulting Bayesian adversarial robustness problem is a relaxation of the usual minmax problem. To solve this problem, we propose Abram -- a continuous-time particle system that shall approximate the gradient flow corresponding to the underlying learning problem. We show that Abram approximates a McKean-Vlasov process and justify the use of Abram by giving assumptions under which the McKean-Vlasov process finds the minimiser of the Bayesian adversarial robustness problem. We discuss two ways to discretise Abram and show its suitability  in benchmark adversarial deep learning experiments.
\end{abstract}

\noindent \textbf{Keywords:} Machine learning, adversarial robustness, stochastic differential equations, McKean-Vlasov process, particle system

\noindent \textbf{MSC(2020):} 90C15, 
65C35, 
 68T07 

\section{Introduction} \label{Sec_intro}
Machine learning and artificial intelligence play a major role in today's society: self-driving cars (e.g., \cite{Bachute}), automated medical diagnoses (e.g., \cite{Rajkomar}), and security systems based on face recognition (e.g., \cite{Sharma}), for instance, are often based on certain machine learning models, such as \textit{deep neural networks} (DNNs). DNNs are often discontinuous with respect to their input \citep{szegedy2014intriguing} making them susceptible to so-called \emph{adversarial attacks}. In an adversarial attack, an adversary aims to change the prediction of a DNN through a directed, but small perturbation to the input. We refer to \cite{goodfellow2015explaining} for an example showing the weakness of DNNs towards adversarial attacks. Especially when employing DNNs in safety-critical applications, the training of machine learning models in a way that is robust to adversarial attacks has become a vital task. 

Machine learning models are usually trained by minimising an associated loss function. In adversarially robust learning, this loss function is considered to be subject to adversarial attacks. The adversarial attack is usually given by a perturbation of the input data that is chosen to maximise the loss function. Thus, adversarial robust learning is formulated as a minmax optimisation problem. 
In practice, the inner maximisation problem needs to be approximated: \cite{goodfellow2015explaining} proposed the Fast Gradient Sign Method (FGSM) which perturbs the input data to maximise the loss function with a single step. Improvements of FGSM were proposed by, e.g., \cite{tra2018ensemble, kurakin2017adversarial, Wong2020Fast}. Another popular methodology is Projected Gradient Descent (PGD)  \citep{madry2018towards} and its variants, see, for example, \cite{8579055, pmlr-v97-yang19e, mosbach2019logit, NEURIPS2019_5d4ae76f, pmlr-v119-maini20a, pmlr-v119-croce20b}. Similar to FGSM, PGD considers the minmax optimisation problem  but uses multi-step gradient ascent to approximate the inner maximisation problem. Notably, \cite{Wong2020Fast} showed that FGSM with random initialization is as effective as PGD.

Other defense methods include preprocessing (e.g. \citealt{XU1, song2018pixeldefend, guo2018countering, 9008296}) and detection (e.g. \citealt{10.1145/3128572.3140444, metzen2017on, 9878604, 10030535}), as well as provable defenses (e.g. \citealt{pmlr-v80-wong18a, 9010971, sheikholeslami2021provably, jia2022multiguard}). Various attack methods have also been proposed, see, for instance, \cite{7958570, pmlr-v97-wong19a, Ghiasi2020BREAKING, pmlr-v119-croce20b}. More recently, there is an increased focus on using generative models to improve adversarial accuracy, see for example \cite{pmlr-v162-nie22a, pmlr-v202-wang23ad, NEURIPS2023_088463cd}.

In the present work, we study the case of an adversary that finds their attack following a Bayesian statistical methodology. The Bayesian adversary does not find the attack through optimisation, but by sampling a probability distribution that can be derived using Bayes' Theorem. Importantly, we study the setting in which the adversary uses a Bayesian strategy, but the machine learner/defender trains the model using optimisation, which is in contrast to \cite{Ye2018}. The associated Bayesian adversarial robustness problem can be interpreted as a stochastic relaxation of the classical minmax problem that replaces the inner maximisation problem with an integral.  After establishing this connection, we 
\begin{itemize}
    \item propose \emph{Abram} (short for \emph{Adversarial Bayesian Particle Sampler}), a particle-based continuous-time dynamical system that simultaneously approximates the behaviour of the Bayesian adversary and trains the model via gradient descent.
    \end{itemize}
    Particle systems of this form have been used previously to solve such optimisation problems in the context of maximum marginal likelihood estimation, see, e.g., \cite{Akyildiz} and \cite{Kuntz}. In order to justify the use Abram in this situation, we
    \begin{itemize}
    \item show that Abram converges to a McKean--Vlasov stochastic differential equation as the number of particles goes to infinity, and 
    \item give assumptions under which the McKean-Vlasov SDE converges to the minimiser of the Bayesian adversarial robustness problem with an exponential rate.
    \end{itemize}
Additional complexity arises here compared to earlier work as the dynamical system and its limiting McKean-Vlasov SDE have to be considered under reflecting boundary conditions. After the analysis of the continuous-time system, we briefly explain its discretisation. Then, we
    \begin{itemize}
    \item compare Abram to the state-of-the-art in adversarially robust classificaton of the MNIST and the CIFAR-10 datasets under various kinds of attacks.
\end{itemize}

This work is organised as follows. We introduce the (Bayesian) adversarial robustness problem in Section~\ref{Subs:BayeAdvPartS} and the Abram method in Section~\ref{Sec_Abram}. We analyse Abram in Sections~\ref{Sec_propChaos} (large particle limit) and \ref{Sec_McVl} (longtime behaviour). We  discuss different ways of employing Abram in practice in Section~\ref{Sec_Discre} and compare it to the state-of-the-art in adversarially robust learning in Section~\ref{Sec_Exp}. We conclude in Section~\ref{Sec_concl}.

\section{Adversarial robustness and its Bayesian relaxation} \label{Subs:BayeAdvPartS}
In the following, we consider a supervised machine learning problem of the following form. We are given a \emph{training dataset} $\{(y_1, z_1),\ldots,(y_K, z_K)\}$ of pairs of features $y_1,\ldots,y_K \in Y :=\mathbb{R}^{d_Y}$ and \emph{labels} $z_1,\ldots,z_K \in Z$. Moreover, we are given a parametric model of the form $g: X \times Y \rightarrow Z$, with $X := \mathbb{R}^d$ denoting the parameter space. Goal is now to find a parameter $\theta^*$, for which 
$$
g(y_k|\theta^*) \approx z_k \qquad (k=1,\ldots,K).
$$
In practice the function $g(\cdot|\theta^*)$ shall then be used to predict labels of features (especially such outside of training dataset).

The parameter $\theta^*$ is usually found through optimisation. Let $\ell:Z \times Z \rightarrow \mathbb{R}$ denote a loss function -- a function that gives a reasonable way of comparing the output of $g$ with observed labels. Then, we need to solve the following optimisation problem:
\begin{align} \label{eq:learn_non_adv}
    \min_{\theta \in X} & \frac{1}{K}\sum_{k=1}^K \Phi(y_k,z_k|\theta)
\end{align}
where $\Phi(y,z|\theta) := \ell(g(y|\theta),z)$.

Machine learning models $g$ that are trained in this form are often susceptible to adversarial attacks. That means, for a given feature vector $y$, we can find a `small' $\xi \in Y$ for which $g(y+\xi|\theta^*) \neq g(y|\theta^*)$. In this case, an adversary can change the model's predicted label by a very slight alteration of the input feature.
Such a $\xi$ can usually be found through optimisation on the input domain: 
$$\max_{\xi \in B(\varepsilon)}\Phi(y+\xi, z|\theta),$$
where $B(\varepsilon) = \{\xi: \|\xi\|\leq \varepsilon\}$ denotes the $\varepsilon$-ball centred at $0$ and $\varepsilon > 0$ denotes the size of the adversarial attack. Hence, the attacker tries to change the prediction of the model whilst altering the model input only by a small value $\leq \varepsilon$. Other kinds of attacks are possible, the attacker may, e.g., try to not only change the predicted label to a \emph{some} other label, but actually change it to a particular other label, see, e.g., \cite{kurakin2017adversarial2}.

In adversarially robust training, we replace the optimisation problem \eqref{eq:learn_non_adv} by the minmax optimisation problem below:
\begin{equation} \label{eq:learn_adv}
    \min_{\theta \in X} \frac{1}{K}\sum_{k=1}^K \max_{\xi_k \in B(\varepsilon)}\Phi(y_k+\xi_k,z_k|\theta).
\end{equation}
Thus, we now train the network by minimising the loss also with respect to potential adversarial attacks.
Finding the accurate solutions to such minmax optimisation problems is difficult: there is no underlying saddlepoint structure, e.g., $\Phi(y,z|\theta)$ is neither convex in $\theta$ nor concave in $y$, $X$ and $Y$ tend to be very high-dimensional spaces, and the number of datasets $K$ may prevent the accurate computation of gradients. However, good heuristics have been established throughout the last decade -- we have mentioned some of them in Section~\ref{Sec_intro}.

In this work, we aim to study a relaxed version of the minmax problem, which we refer to as the \emph{Bayesian adversarial robustness problem}. This problem is given by
\begin{equation} \label{eq_BayesianAdv}
    \min_{\theta \in X} \frac{1}{K}\sum_{k =1}^K \int \Phi(y_k+\xi_k,z_k|\theta) \pi^{\gamma, \varepsilon}_k(\mathrm{d}\xi_k|\theta),
\end{equation}
where the \emph{Bayesian adversarial distribution} $\pi^{\gamma, \varepsilon}_k(\cdot|\theta)$ has (Lebesgue) density 
$$
\xi \mapsto \frac{\exp(\gamma\Phi(y_k+\xi,z_k|\theta)) \mathbf{1}[\xi \in B(\varepsilon)]}{\int_{B(\varepsilon)} \exp(\gamma\Phi(y_k+\xi',z_k|\theta)) \mathrm{d}\xi'},
$$
where $\gamma > 0$ is an \emph{inverse temperature}, $\varepsilon > 0$ still denotes the size of the adversarial attack, and $\mathbf{1}[\cdot]$ denotes the indicator: $\mathbf{1}[\mathrm{true}] := 1$ and $\mathbf{1}[\mathrm{false}] := 0$. The distribution $\pi^{\gamma, \varepsilon}_k(\cdot|\theta)$ is concentrated on the $\varepsilon$-ball, $\varepsilon>0$ controls the range of the attack, $\gamma >0$ controls its focus.  We illustrate this behaviour in Figure~\ref{fig:epsvsgamma}.
Next, we comment on the mentioned relaxation and the Bayesian derivation  of this optimisation problem.

\begin{figure}
    \centering
    \includegraphics[scale = 0.95]{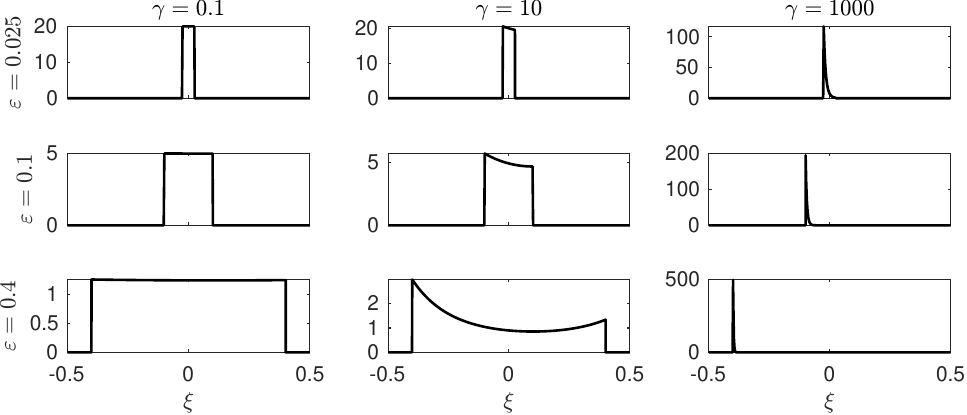}
    \caption{Plots of the Lebesgue density of $\pi_1^{\gamma, \varepsilon}(\cdot|\theta_0)$ for energy $\Phi(y_1 + \xi, z_1|\theta_0) = (\xi-0.1)^2/2$, choosing parameters $\varepsilon \in \{0.025, 0.1, 0.4\}$ and $\gamma \in \{0.1, 10, 1000\}$.}
    \label{fig:epsvsgamma}
\end{figure}

\paragraph{Relaxation.}
Under certain assumptions on $\Phi$, one can show that $$\pi^{\gamma, \varepsilon}_k(\cdot|\theta) \rightarrow \mathrm{Unif}(\mathrm{argmax}_{\xi\in Y} \Phi(y_k + \xi,z_k|\theta))$$ weakly as $\gamma \rightarrow \infty$, see \cite{Hwang}. Indeed, the Bayesian adversarial distribution converges to the uniform distribution over the global maximisers computed with respect to the adversarial attack. This limiting behaviour, that we can also see in Figure~\ref{fig:epsvsgamma}, forms the basis of simulated annealing methods for global optimisation. Moreover, it implies that the optimisation problems \eqref{eq:learn_adv} and \eqref{eq_BayesianAdv} for $\gamma = 0$ are identical, since
\begin{equation*}
    \frac{1}{K}\sum_{k =1}^K \int \Phi(y_k+\xi_k,z_k|\theta) \pi^{0, \varepsilon}_k(\mathrm{d}\xi_i|\theta) =  \frac{1}{K}\sum_{k =1}^K \int \Phi(y_k+\xi_k,z_k|\theta) \mathrm{Unif}(\mathrm{argmax}_{\xi\in Y} \Phi(y_k + \xi,z_k|\theta))(\mathrm{d}\xi_i)
\end{equation*}
and since $\xi_k \sim \mathrm{Unif}(\mathrm{argmax}_{\xi\in Y} \Phi(y_k + \xi,z_k|\theta))$ implies 
$\Phi(y_k+\xi_k,z_k|\theta) = \max_{\xi \in B(\varepsilon)}\Phi(y_k+\xi,z_k|\theta)$ for $k =1,\ldots,K$.
A strictly positive $\gamma$ on the other hand leads to a relaxed problem circumventing the minmax optimisation. 
\cite{cipriani2024minimaxoptimalcontrolapproach} have also discussed this relaxation of an adversarial robustness problem in the context of a finite set of attacks, i.e. the $\varepsilon$-ball $B(\varepsilon)$ is replaced by a finite set.

\paragraph{Bayesian.} We can understand the kind of attack that is implicitly employed in \eqref{eq_BayesianAdv} as a Bayesian attack. We now briefly introduce the Bayesian learning problem to then explain its relation to this adversarial attack. In Bayesian learning, we model $\theta$ as a random variable with a so-called \emph{prior (distribution)} $\pi_{\rm prior}$. The prior incorporates information about $\theta$. In Bayesian learning, we now inform the prior about data $\{(y_1, z_1),\ldots,(y_K, z_K)\}$  by conditioning $\theta$ on that data. Indeed, we train the model by finding the conditional distribution of $\theta$ given that $g(z_k|\theta) \approx y_k$ ($k =1,\ldots,K)$. In the Bayesian setting, we represent `$\approx$' by a noise assumption consistent with the loss function $\ell$. This is achieved by defining the so-called \emph{likelihood} as $\exp(-\Phi)$. The conditional distribution describing $\theta$ is called \emph{posterior (distribution)} $\pi_{\rm post}$ and can be obtained through Bayes' theorem, which states that 
$$
\pi_{\rm post}(A) = \frac{\int_A \exp\left(-\frac{1}{K}\sum_{k=1}^K\Phi(y_k, z_k|\theta)\right) \pi_{\rm prior}(\mathrm{d}\theta)}{\int_X \exp\left(-\frac{1}{K}\sum_{i=k}^K\Phi(y_k, z_k|\theta')\right) \pi_{\rm prior}(\mathrm{d}\theta')}
$$
for measurable $A \subseteq X$.
A model prediction with respect to feature $z$ can then be given by the posterior mean of the output $g$, which is 
$$
\int g(z|\theta) \pi_{\rm post}(\mathrm{d}\theta)
$$
The Bayesian attacker treats the attack $\xi_k$ in exactly such a Bayesian way. They define a prior distribution for the attack, which is the uniform distribution over the $\varepsilon$-ball: $$\mathrm{Unif}(B(\varepsilon)) = \int_{B(\varepsilon)}\mathbf{1}[\xi_k \in \cdot ]\mathrm{d}\xi_k.$$ The adversarial likelihood is designed to essentially cancel  out the likelihood in the Bayesian learning problem, by defining a function that gives small mass to the learnt prediction and large mass to anything that does not agree with the learnt prediction:
$$
\exp(\gamma\Phi(y_k+\xi_k,z_k|\theta)).
$$
Whilst this is not a usual likelihood corresponding to a particular noise model, we could see this as a special case of \emph{Bayesian forgetting} \citep{FuHeXu}. In Bayesian forgetting, we would try to remove a single dataset from a posterior distribution by altering the distribution of the parameter $\theta$. In this case, we try to alter the knowledge we could have gained about the feature vector by altering that feature vector to produce a different prediction.

\section{Adversarial Bayesian Particle Sampler} \label{Sec_Abram}
We now derive a particle-based method that shall solve \eqref{eq_BayesianAdv}. To simplify the presentation in the following, we  assume that $K=1$, i.e., there is only a single data set. The derivation for multiple data sets is equivalent -- computational implications given by multiple datasets will be discussed in Section~\ref{Sec_Discre}. We also ignore the dependence of $\Phi$ on particular datasets and note only the dependence on parameter and attack. Indeed, we write \eqref{eq_BayesianAdv} now as
$$
\min_{\theta \in X} F(\theta) := \int \Phi(\xi,\theta) \pi^{\gamma, \varepsilon}(\mathrm{d}\xi|\theta).
$$
To solve this minimisation problem, we study the gradient flow corresponding to the energy $F$, that is: ${\mathrm{d}\zeta_t} = -\nabla_\zeta F(\zeta_t){\mathrm{d}t}$. The gradient flow is a continuous-time variant of the gradient descent algorithm. The gradient flow can be shown to converge to a minimiser of $F$ in the longterm limit if $F$ satisfies certain regularity assumptions. The gradient of $F$ has  a rather simple expression:
\begin{align*}
    \nabla_\theta F(\theta) 
    &= \nabla_\theta\frac{ \int_{B(\varepsilon)}\Phi(\xi,\theta)\exp(\gamma\Phi(\xi',\theta)) \mathrm{d}\xi'}{\int_{B(\varepsilon)} \exp(\gamma\Phi(\xi',\theta)) \mathrm{d}\xi'} \\
    &= \frac{ \int_{B(\varepsilon)} \nabla_\theta\Phi(\xi,\theta) \cdot \exp(\gamma\Phi(\xi',\theta)) + \gamma\nabla_\theta\Phi(\xi,\theta) \cdot \Phi(\xi,\theta)\exp(\gamma\Phi(\xi',\theta)) \mathrm{d}\xi'}{\int_{B(\varepsilon)} \exp(\gamma\Phi(\xi',\theta)) \mathrm{d}\xi'} \\
    &\qquad - \frac{\left( \int_{B(\varepsilon)}\Phi(\xi,\theta)\exp(\gamma\Phi(\xi',\theta)) \mathrm{d}\xi'\right)\left(\int_{\|\cdot\| \leq \varepsilon } \gamma \nabla_\theta\Phi(\xi,\theta) \cdot\exp(\gamma\Phi(\xi',\theta)) \mathrm{d}\xi'\right)}{\left(\int_{B(\varepsilon)}\exp(\gamma\Phi(\xi',\theta)) \mathrm{d}\xi'\right)^2} \\
    &= \int \nabla_\theta\Phi(\xi,\theta) \pi^{\gamma, \varepsilon}(\mathrm{d}\xi|\theta) + \gamma\mathrm{Cov}_{\pi^{\gamma, \varepsilon}(\cdot|\theta)}(\Phi(\cdot,\theta), \nabla_\theta \Phi(\cdot,\theta)),
\end{align*}
where we assume that $\Phi$ is continuously differentiable, bounded below, and sufficient regularity to be allowed here to switch gradients and integrals. As usual, we define the covariance of appropriate functions $f,g$ with respect to a probability distribution $\pi$, by
$$
\mathrm{Cov}_{\pi}(f,g) := \int_X f(\theta)g(\theta) \pi(\mathrm{d}\theta) - \int_X f(\theta') \pi(\mathrm{d}\theta')\int_X g(\theta'') \pi(\mathrm{d}\theta''). 
$$
The structure of $\nabla_\theta F$ is surprisingly simple, requiring only integrals of the target function and its gradient with respect to $\pi^{\gamma, \varepsilon}$, but, e.g., not its normalising constant. In practice, it is usually not possible to compute these integrals analytically or to even sample independently from $\pi^{\gamma, \varepsilon}(\cdot|\theta)$, which would be necessary for a stochastic gradient descent approach. The latter approach first introduced by \cite{RobbinsMonro} allows the minimisation of expected values by replacing these expected values by sample means; see also \cite{Jin1} and \cite{Latz} for continuous-time variants. Instead, we use a particle system approach that has been studied for a different problem by \cite{Akyildiz} and \cite{Kuntz}. The underlying idea is to approximate $\pi^{\gamma, \varepsilon}(\cdot|\theta)$ by an overdamped Langevin dynamics which is restricted to the $\varepsilon$-Ball $B(\varepsilon)$ with reflecting boundary conditions:
$$\mathrm{d}\xi_t = \gamma \nabla_\xi \Phi(\xi_t,\theta)\mathrm{d}t + \sqrt{2}\mathrm{d}W_t,$$
where $(W_t)_{t \geq 0}$ denotes a standard Brownian motion on $Y$.
Under weak assumptions on $\Phi$, this Langevin dynamics  converges to the distribution $\pi^{\gamma, \varepsilon}(\cdot|\theta)$ as $t \rightarrow \infty$. However, in practice, we are not able to simulate the longterm behaviour of this dynamics for all fixed $\theta$ to produce samples of $\pi^{\gamma, \varepsilon}(\cdot|\theta)$ as required for stochastic gradient descent. Instead, we run a number $N$ of (seemingly independent) Langevin dynamics $(\xi_t^{1,N})_{t \geq 0}, \ldots, (\xi_t^{N,N})_{t \geq 0}$. We then obtain an approximate gradient flow $(\theta_t^N)_{t \geq 0}$ that uses the ensemble of particles $(\xi_t^{1,N})_{t \geq 0}, \ldots, (\xi_t^{N,N})_{t \geq 0}$ to approximate the expected values in the gradient $\nabla_\theta F$ and then feed $(\theta_t^N)_{t \geq 0}$ back into the drift of the $(\xi_t^{1,N})_{t \geq 0}, \ldots, (\xi_t^{N,N})_{t \geq 0}$. Hence, we simultaneously approximate the gradient flow $(\zeta_t)_{t \geq 0}$ by $(\theta_t)_{t \geq 0}$ and the Bayesian adversarial distribution $(\pi^{\gamma, \varepsilon}(\cdot|\theta_t))_{t \geq 0}$ by $(\xi_t^{1,N})_{t \geq 0}, \ldots, (\xi_t^{N,N})_{t \geq 0}$. Overall, we obtain the dynamical system
\begin{align*}
    \mathrm{d}\theta_t^N &= - \frac{1}{N}\sum_{n=1}^N\nabla_\theta \Phi(\xi_t^{n,N},\theta_t^N)\mathrm{d}t -  \gamma\widehat{\mathrm{Cov}}(\xi_t^N)\mathrm{d}t \\
    \mathrm{d}\xi_t^{i,N} &= \gamma \nabla_\xi  \Phi(\xi_t^{i,N},\theta_t)\mathrm{d}t + \sqrt{2}\mathrm{d}W_t^{i} \qquad (i  =1,...,N).
\end{align*}
where $(W_t^{i})_{t \geq 0}$ are mutually independent Brownian motions on $Y$ for $i = 1,...,N$. Again, the Langevin dynamics $(\xi_t^{1,N})_{t \geq 0}, \ldots, (\xi_t^{N,N})_{t \geq 0}$ are defined on the ball $B(\varepsilon)$ with reflecting boundary conditions -- we  formalise this fact below. The empirical covariance is given by
$$\widehat{\mathrm{Cov}}(\xi_t^N) = \frac1{N} \sum_{i=1}^N \Phi(\xi_t^{i,N},\theta_t)\nabla_\theta \Phi(\xi_t^{i,N},\theta_t) - \frac1{N^2}\sum_{i'=1}^K \Phi(\xi_t^{i',N},\theta_t) \sum_{i''=1}^K\nabla_\theta \Phi(\xi_t^{i'',N},\theta_t).$$ 
We refer to the dynamical system $(\theta^N_t, \xi^{1,N}_t,\ldots,\xi^{N,N}_t)_{t \geq 0}$ as \emph{Abram}.
We illustrate the dynamics of Abram in Figure~\ref{fig:paths}, where we consider a simple example.

\begin{figure}
    \centering
    \includegraphics[scale=0.70]{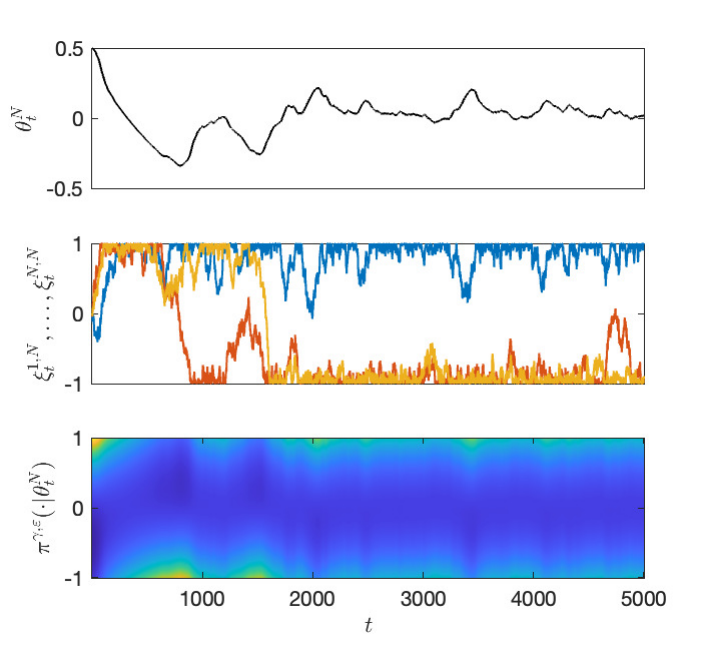}
    \includegraphics[scale=0.70]{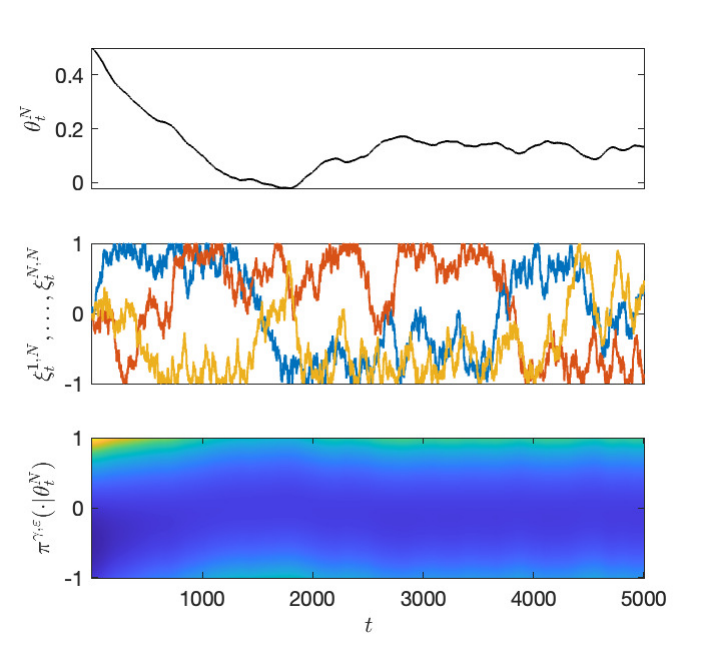}

    \includegraphics[scale=0.70]{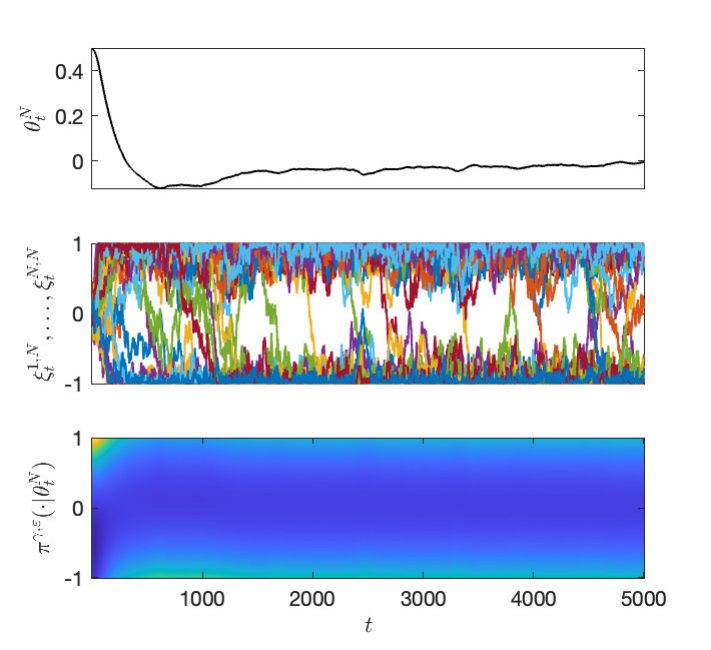}
    \includegraphics[scale=0.70]{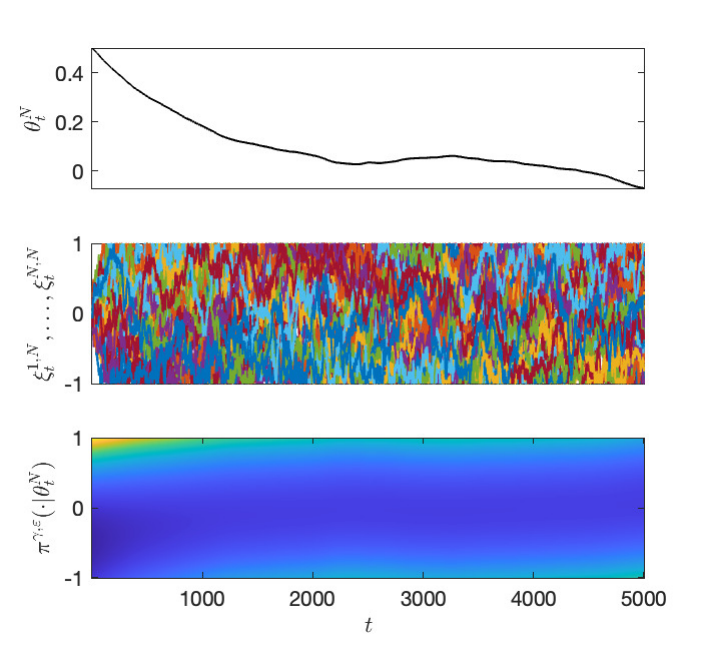}
    \caption{Examples of the Abram method given $\Phi(\xi, \theta) = \frac{1}{2}(\xi + \theta)^2$, $\varepsilon = 1$, and different combinations of $(\gamma, N) = (10,3)$ (top left), $(0.1,3)$ (top right), $(10,50)$ (bottom left), $(0.1, 50)$ (bottom right). In each of the four quadrants, we show the simulated path $(\theta_t^N)_{t \geq 0}$ (top), the particle paths $(\xi_t^{1,N},\ldots,\xi_t^{N,N})_{t \geq 0}$ (centre), and the path of probability distributions $(\pi^{\gamma, \varepsilon}(\cdot|\theta_t^N))_{t \geq 0}$ (bottom) that shall be approximated by the particles. The larger $\gamma$ leads to a concentration of $\pi^{\gamma, \varepsilon}$ at the boundary, whilst it is closer to uniform if $\gamma$ is small. More particles lead to a more stable path $(\theta^N_t)_{t \geq 0}$. A combination of large $N$ and $\gamma$ leads to convergence to the minimiser $\theta_* = 0$ of $F$. }
    \label{fig:paths}
\end{figure}

We have motivated this particle system as an approximation to the underlying gradient flow $(\zeta_t)_{t \geq 0}$. As $N \rightarrow \infty$, the dynamics $(\theta^N_t)_{t \geq 0}$ does not necessarily convergence to the gradient flow $(\zeta_t)_{t \geq 0}$, but to a certain McKean--Vlasov Stochastic Differential Equation (SDE), see \cite{McKean}. We study this convergence behaviour in the following, as well as the  convergence of the McKean--Vlasov SDE to the minimiser of $F$ and, thus, justify Abram as a method for Bayesian adversarial learning. First, we introduce the complete mathematical set-up and give required assumptions. 

\subsection{Mean-field limit}
In the following, we are interested in the mean field limit of Abram, i.e., we analyse the limit of $(\theta^N_t)_{t  \geq 0}$ as $N \rightarrow \infty$. Thus, we can certainly assume for now that $\gamma := 1$ and $\varepsilon \in (0,1)$ being fixed. We write $B := B(\varepsilon)$. Then, Abram $(\theta^N_t, \xi^{1,N}_t,\ldots,\xi^{N,N}_t)_{t \geq 0}$ satisfies 
\begin{align}\label{main}
    \theta^N_t &= \theta_0 - \int_0^t\mu_{c}^N(\nabla_\theta \Phi(\cdot,\theta^N_s))\mathrm{d}s -  \int_0^t \mathrm{Cov}_{\mu_{c}^N}(\Phi(\cdot,\theta^N_s),\nabla_\theta \Phi(\cdot,\theta^N_s))\mathrm{d}s\nonumber \\
    \xi_t^{i,N} &=\xi^i_0 +\int_0^t\nabla_x \Phi(\xi_s^{i,N},\theta^N_s)\mathrm{d}s + \sqrt{2}W^i_t+\int_0^tn(\xi_s^{i,N})\mathrm{d}l^{i,N}_s\qquad  (i=1,..., N).
\end{align}
Here, $(W_t^1)_{t \geq 0},\ldots, (W_t^N)_{t \geq 0}$ are independent Brownian motions on $Y$ and the initial particle values $\xi^1_0,\ldots,\xi^N_0$ are independent and identically distributed. There and throughout the rest of this work, we denote the expectation of some appropriate function $f$ with respect to a probability measure $\pi$ by $\pi(f) := \int_X f(\theta) \pi(\mathrm{d}\theta).$ We use $\mu_t^N$ to denote the empirical distribution of the particles $(\xi_t^{1,N}, \ldots, \xi_t^{N,N})$ at time $t \geq 0$. That is $\mu_t^N:=\frac{1}{N}\sum_{i=1}^N\delta(\cdot - {\xi_t^{i, N}})$, where $\delta(\cdot - \xi)$ is the Dirac mass concentrated in $\xi \in B$. This implies especially that we can write
\begin{align*}
\mu_t^N(f) = \frac{1}{N}\sum_{i=1}^N  f(\xi_t^{i,N}), \qquad 
  \mathrm{Cov}_{\mu_t^N}(f,g) = \frac{1}{N}\sum_{i=1}^N f(\xi_t^{i,N}g(\xi_t^{i,N})-\frac{1}{N^2}\sum_{i=1}^N \sum_{j=1}^N f(\xi_t^{i,N})g(\xi_t^{j,N}),
\end{align*}
for appropriate functions $f$ and $g$.
The particles are constrained in $B$ by the last term in the equations of the $(\xi_t^{1,N}, \ldots, \xi_t^{N,N})_{t \geq 0}$. Here, $n(x)=-x/\norm{x}$ for $x\in \partial B$ is the inner normal vector field and
$l^{i,N}$'s are a non-decreasing functions  with $l^{i,N}(0)=0$ and $\int_0^t \boldsymbol{1}[{\xi^{i,N}_s\notin\partial B(\varepsilon)}]dl^{i,N}(s)=0$, see \cite{Pilipenko}  for details on reflecting boundary conditions in diffusion processes. 
Additionally, it is convenient to define
\begin{align*}
    G(\theta,\nu)= \nabla_\theta\Big[\nu( \Phi(\cdot,\theta))+ \mathrm{Var}_\nu[\Phi(\cdot,\theta)]/2\Big]=\nu(\nabla_\theta \Phi(\cdot,\theta))+\mathrm{Cov}_{\nu}(\Phi(\cdot ,\theta),\nabla_\theta \Phi(\cdot,\theta)).
\end{align*}
for any probability measure $\nu$ on ${B}$ and $\theta\in X$.

 We finish this background section by defining the limiting McKean--Vlasov SDE with reflection
\begin{align}\label{eq: limit}
      \theta_t &= \theta_0- \int_0^t\mu_s(\nabla_\theta \Phi(\cdot,\theta_s))\mathrm{d}s -  \int_0^t \mathrm{Cov}_{\mu_s}(\Phi(\cdot ,\theta_s),\nabla_\theta \Phi(\cdot ,\theta_s))\mathrm{d}s,\nonumber \\
    \xi_t &=\xi_0+ \int_0^t\nabla_x \Phi(\xi_s,\theta_s)\mathrm{d}s + \sqrt{2}W_t+\int_0^tn(\xi_s)\mathrm{d}l_s,
\end{align}
 with $\mu_t$ denoting the law of $\xi_t$ at time $t \geq 0$. Goal of this work is to show that the particle system \eqref{main} converges to this McKean-Vlasov SDEs as $N\to\infty$ and to then show that the McKean-Vlasov SDE can find the minimiser of $F$.

\subsection{Assumptions} \label{Subsc: Assump}
We now list assumptions that we consider throughout this work. We start with the Lipschitz continuity of $\nabla \Phi$ and $G$.

\begin{assumption}[Lipschitz]\label{ass: lip}
The function $\nabla_\xi \Phi$
is Lipschitz continuous, i.e. there exists a Lipschitz constant $L>1$ such that 
\begin{align*}
    \norm{\nabla_x \Phi(x,\tilde\theta)-\nabla_x \Phi(\tilde x,\tilde\theta)}\le L\Big(\norm{x-\tilde x}+\norm{\theta-\tilde\theta}\Big)
\end{align*}
for any $\xi, \tilde \xi \in B$ and $\theta,\tilde\theta\in \R^n$.
Similarly, we assume that $G(\theta, \mu)$ is Lipschitz in the following sense: there is an $L>1$ such that 
\begin{align*}
    \norm{G(\theta,\nu)-G(\tilde\theta,\tilde\nu)}\le L\Big(\norm{\theta-\tilde\theta}+\cW_1(\nu,\tilde \nu)\Big)
\end{align*}
for any probability measures $\nu$, $\tilde\nu$ on $B$ and $\theta,\ \tilde \theta\in \R^n.$

\end{assumption}
 In Assumption~\ref{ass: lip} and throughout this work, $\cW_p$ denotes the \emph{Wasserstein-$p$ distance} given by
\begin{align*}
    \cW^p_p(\nu,\nu') =  \inf\left\lbrace\int_{X\times X}\norm{y-y'}^p\Gamma(\mathrm{d}y,\mathrm{d}y') : \Gamma \text{ is a coupling of } \nu, \nu'\right\rbrace,
\end{align*}
for probability distributions $\nu, \nu'$ on $(X, \mathcal{B}X)$ and $p \geq 1$. In addition to the Wasserstein distance, we sometimes measure the distance between probability distributions $\nu, \nu'$ on $(X, \mathcal{B}X)$ using the \emph{total variation distance} given by
$$
\|\nu -\nu'\|_{\TV} = \sup_{A \in \mathcal{B}X}|\nu(A)-\nu'(A)|.
$$
The Lipschitz continuity of $G$ actually already implies the Lipschitz continuity of $\nabla_\theta\Phi$. By setting $\nu= \delta(\cdot -x)$ and $\tilde \nu=\delta(\cdot - {\tilde x})$, we have
\begin{align*}
    \norm{\nabla_\theta \Phi(x,\tilde\theta)-\nabla_\theta \Phi(\tilde x,\tilde\theta)} &= \norm{G(\theta,\delta(\cdot -x))-G(\tilde\theta,\delta(\cdot -\tilde x))} \\ &\le L\Big(\norm{\theta-\tilde\theta}+\cW_1(\delta(\cdot - x),\delta(\cdot - \tilde x))\Big) = L\Big(\norm{x-\tilde x}+\norm{\theta-\tilde\theta}\Big) .
\end{align*} 
We assume throughout that the constant $L>1$ to simplify the constants in the Theorem \ref{th: limit}.
Finally, we note that Assumption~\ref{ass: lip} implies the well-posedness of  both \eqref{main}   and \eqref{eq: limit}, see \citet[Theorems 3.1, 3.2]{Adams}.

Next, we assume the strong convexity of $G$, which, as we note below, also implies the strong convexity of $\Phi(x,\cdot)$ for any $x \in B$. This assumption is not realistic in the context of deep learning (e.g., \cite{pmlr-v38-choromanska15}), but not unusual when analysing learning techniques.
\begin{assumption}[Strong convexity]\label{ass: convex}
   For any probability measure $\nu$ on $(B, \mathcal{B}B)$, $G(\cdot ,\nu)$ is $2\lambda$-strongly convex, 
i.e., for any $\theta, \tilde\theta \in \R^n$, we have
\begin{align*}
    \ip{G(\theta,\nu)-G(\tilde\theta,\nu), \theta-\tilde\theta}\ge 2\lambda\norm{\theta-\tilde\theta}^2,
\end{align*}
for some $\lambda > 0$.
\end{assumption}

By choosing $\nu= \delta(\cdot -\xi)$ in Assumption \ref{ass: convex} for $\xi \in B,$ we have $\mathrm{Cov}_{\nu}(\Phi(\cdot,\theta),\nabla_\theta \Phi(\cdot,\theta))=0$, which implies that $\ip{\nabla_\theta \Phi(x,\theta) -\nabla_\theta \Phi(x,\theta'),\theta-\theta'}\ge 2\lambda \norm{\theta-\theta'}^2$. Thus, the $2\lambda$-strong convexity of $G$ in $\theta$ also implies the $2\lambda$-strong convexity of $\Phi$ in $\theta$.



The assumptions collected in this sections are fairly strong, they are satisfied in certain linear-quadratic problems on bounded domains. We illustrate this in an example below.

\begin{example} \label{exampl}
We consider a prototypical adversarial robustness problem based on the potential $\Phi(\xi, \theta) := \norm{\xi-\theta}^2$ with $\theta$ in a bounded set $X' \subseteq X$ -- problems of this form appear, e.g., in adversarially robust linear regression. Next, we are going to verify that this problem satisfies Assumptions \ref{ass: lip} and \ref{ass: convex}.

We have $\nabla_\xi\Phi(\xi,\theta)=2(\xi-\theta),$ which is Lipschitz  in both $\theta$ and $\xi$. Since 
\begin{align*}
\nabla_\theta\Phi(\xi,\theta)&=2(\xi-\theta), \\
    \Phi(\xi,\theta)-\int_B\Phi(\xi,\theta)\nu(\ud \xi)&=(\norm{\xi}^2-\int_B \norm{\xi}^2\nu(\ud \xi)) -2\theta\cdot (\xi-\int_B \xi\nu(\ud \xi)) \\
    \nabla_\theta\Phi(\xi,\theta)-\int_B\nabla_\theta\Phi(\xi,\theta)\nu(\ud \xi) &=-2(\xi-\int_B \xi\nu(\ud \xi)),
\end{align*} we have that 
\begin{align*}
    G(\theta,\nu)= 2\theta-2\E_\nu[\xi]+4\theta\cdot \mathrm{Var}_\nu (\xi)-2 \mathrm{Cov}_\nu(\norm{\xi}^2,\xi),
\end{align*}
where $\E_\nu[\xi]=\int_B \xi\nu(\ud \xi)$  and $\mathrm{Cov}_\nu(\norm{\xi}^2,\xi)= \int_B(\norm{\xi}^2-\E_\nu[\norm{\xi}^2]) (\xi-\E_\nu[\xi])\nu(\ud \xi)$. Since the $\e$-ball and $\theta \in X'$ are bounded, we have that $G(\theta,\nu)$ is Lipschitz in both $\theta$ and $\nu$. Thus, it satisfies Assumption \ref{ass: lip}. 
In order to make $G(\theta,\nu)$ satisfy Assumption \ref{ass: convex}, we choose $\e$ small enough such that the term $4\theta\cdot \mathrm{Var}_\nu (\xi)$ is $1$-Lipschitz. In this case, we can verify that $\ip{G(\theta,\nu)-G(\theta',\nu),\theta-\theta'}\ge \norm{\theta-\theta'}^2$ and, thus,  Assumption \ref{ass: convex}.

\end{example}

\section{Propagation of chaos} \label{Sec_propChaos}
We now study the large particle limit ($N\to\infty$) of the Abram dynamics $\eqref{main}$. When considering a finite time interval $[0,T]$, we see that  the particle system $\eqref{main}$ approximates the McKean-Vlasov SDE \eqref{eq: limit} in this limit. We note that we assume in the following that $0< \varepsilon < 1$. Moreover, we use the Wasserstein-$2$ distance instead of Wasserstein-$1$ distance in Assumption \ref{ass: lip}. We have $\cW_1(\nu,\nu')\le \cW_2(\nu,\nu')$ for any probability measures $\nu,\nu'$ for which the distances are finite, see \cite{Villani}. Thus, convergence in $\cW_2$ also implies convergence in $\cW_1$. We now state the main convergence result.

\begin{theorem}\label{th: propag}
    Let Assumption  \ref{ass: lip} hold. Then, there is a constant $C_{d,T}>0$ such that for all $T\ge 0$ and $N\ge 1$ we have the following inequality
    \begin{align*}
        \sup_{t\in[0,T]}\E\big[\norm{\theta_t^N-\theta_t}^2+\cW^2_2(\mu_t^N,\mu_t)\big]\le C_{d,T}N^{-\alpha_d},
    \end{align*}
    where $\alpha_d= 2/d$ for $d>4$ and $\alpha_d=1/2$ for $d\le 4.$
\end{theorem}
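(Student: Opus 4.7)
\begin{proofof}{Theorem \ref{th: propag} (sketch)}
My plan is to prove the bound via a standard \emph{synchronous coupling} argument, combined with a nonasymptotic empirical-measure estimate of Fournier--Guillin type to extract the dimension-dependent rate. Let $(\bar\theta_t,\bar\xi_t^1,\ldots,\bar\xi_t^N)_{t\geq 0}$ denote $N$ copies of the McKean--Vlasov system \eqref{eq: limit}, all sharing a common macroscopic process $\bar\theta_t=\theta_t$ and driven by the same Brownian motions $W^i$ and same initial conditions $\xi_0^i$ as the particles in \eqref{main}; by \citet[Theorems 3.1, 3.2]{Adams} the $\bar\xi_t^i$ are i.i.d.\ with law $\mu_t$. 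Write $\bar\mu_t^N:=\frac{1}{N}\sum_{i=1}^N \delta(\cdot-\bar\xi_t^i)$ and set
\begin{align*}
E_t^N \;:=\; \E\bigl[\|\theta_t^N-\theta_t\|^2\bigr] \;+\; \frac{1}{N}\sum_{i=1}^N \E\bigl[\|\xi_t^{i,N}-\bar\xi_t^i\|^2\bigr].
\end{align*}

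First I would control the particle part. Since the coupling cancels the Brownian increments, It{\^o}'s formula gives, for each $i$,
\begin{align*}
\mathrm{d}\|\xi_t^{i,N}-\bar\xi_t^i\|^2
&= 2\bigl\langle \xi_t^{i,N}-\bar\xi_t^i,\; \nabla_\xi\Phi(\xi_t^{i,N},\theta_t^N)-\nabla_\xi\Phi(\bar\xi_t^i,\theta_t)\bigr\rangle\,\mathrm{d}t\\
&\quad + 2\bigl\langle \xi_t^{i,N}-\bar\xi_t^i,\,n(\xi_t^{i,N})\bigr\rangle\,\mathrm{d}l_t^{i,N}
\;-\; 2\bigl\langle \xi_t^{i,N}-\bar\xi_t^i,\,n(\bar\xi_t^i)\bigr\rangle\,\mathrm{d}\bar l_t^i.
\end{align*}
Because $B=B(\varepsilon)$ is convex and $n(x)=-x/\|x\|$ is the inward normal, a direct calculation shows $\langle y-x,n(x)\rangle\ge 0$ whenever $x\in\partial B$ and $y\in \bar B$, so both reflection contributions are nonpositive and can be discarded. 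Applying Assumption~\ref{ass: lip} to the drift and Young's inequality then yields
\begin{align*}
\mathrm{d}\|\xi_t^{i,N}-\bar\xi_t^i\|^2 \;\le\; C\bigl(\|\xi_t^{i,N}-\bar\xi_t^i\|^2 + \|\theta_t^N-\theta_t\|^2\bigr)\,\mathrm{d}t.
\end{align*}
For the parameter component I rewrite \eqref{main} and \eqref{eq: limit} using $G$, apply Cauchy--Schwarz and the Lipschitz bound on $G$ from Assumption~\ref{ass: lip}, and use the triangle inequality $\cW_2^2(\mu_s^N,\mu_s)\le 2\cW_2^2(\mu_s^N,\bar\mu_s^N)+2\cW_2^2(\bar\mu_s^N,\mu_s)$ together with the coupling bound $\cW_2^2(\mu_s^N,\bar\mu_s^N)\le \frac{1}{N}\sum_i\|\xi_s^{i,N}-\bar\xi_s^i\|^2$. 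Summing over $i$ and taking expectations leads to an inequality of the form
\begin{align*}
E_t^N \;\le\; C_T \int_0^t E_s^N\,\mathrm{d}s \;+\; C_T \int_0^t \E\bigl[\cW_2^2(\bar\mu_s^N,\mu_s)\bigr]\,\mathrm{d}s.
\end{align*}

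The second term is a pure empirical-measure error for the i.i.d.\ sample $(\bar\xi_s^i)_{i=1}^N$ from $\mu_s$. Since the particles live in the bounded ball $B$, all moments are uniformly bounded in $s\in[0,T]$, so the quantitative bound of \cite{Fournier} (or the compact-support version) applies and gives $\E[\cW_2^2(\bar\mu_s^N,\mu_s)]\le C_d N^{-\alpha_d}$ with the claimed exponent $\alpha_d=2/d$ for $d>4$ and $\alpha_d=1/2$ for $d\le 4$ (a logarithmic factor at $d=4$ being absorbed into $C_{d,T}$). Gronwall's lemma then yields $\sup_{t\in[0,T]} E_t^N \le C_{d,T}\,N^{-\alpha_d}$, and a final application of the triangle inequality to $\cW_2^2(\mu_t^N,\mu_t)$ promotes this bound to the statement of the theorem.

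The main obstacle I expect is the rigorous handling of the reflection terms along the coupling: one must confirm that the local-time increments $\mathrm{d}l^{i,N}$ and $\mathrm{d}\bar l^i$ are genuinely supported on the boundary and that convexity of $B$ forces the associated inner products to be nonpositive even though $\xi^{i,N}$ and $\bar\xi^i$ hit $\partial B$ at different times. Once this reflection-comparison step is in place, the rest is essentially a Gronwall loop, with the dimension-dependent exponent $\alpha_d$ entering solely through the empirical approximation of $\mu_s$ by $\bar\mu_s^N$.
\end{proofof}
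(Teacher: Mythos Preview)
Your proposal is correct and follows essentially the same route as the paper: synchronous coupling with i.i.d.\ auxiliary particles $\bar\xi^i$ driven by the same $W^i$ and initial data, It\^o's formula on $\|\xi^{i,N}_t-\bar\xi^i_t\|^2$ with the reflection terms discarded via convexity of $B$, Lipschitz control of $G$ together with the triangle inequality $\cW_2^2(\mu_s^N,\mu_s)\le 2\cW_2^2(\mu_s^N,\bar\mu_s^N)+2\cW_2^2(\bar\mu_s^N,\mu_s)$, the Fournier--Guillin rate for $\E[\cW_2^2(\bar\mu_s^N,\mu_s)]$, and a Gronwall loop. The paper handles your ``main obstacle'' exactly as you anticipate, by the direct computation $\langle n(x),x-y\rangle=-\varepsilon-\langle n(x),y\rangle\le 0$ on $\partial B$ for $y\in B$.
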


Hence, we obtain convergence of both the gradient flow approximation $(\theta^N_t)_{t \geq 0}$ and the particle approximation $(\mu_t^N)$ to the respective components in the McKean-Vlasov SDE.
We  prove this result by a coupling method. To this end, we first collect a few auxiliary results: studying the large sample limit of an auxiliary particle system and the distance of the original particle system to the auxiliary system. 
To this end, we sample $N$ trajectories of $(\xi_t)_{t \geq 0}$ from equations (\ref{eq: limit}) as 
\begin{align}\label{eq: sample}
    \xi^i_t &=\xi^i_0+ \int_0^t\nabla_x \Phi(\xi^i_s,\theta_s)\mathrm{d}s + \sqrt{2}W^i_t+\int_0^tn(\xi^i_s)\mathrm{d}l^i_s \qquad (i=1, \ldots, N),
\end{align}
where the Brownian motions $(W_t^1,\dots,W_t^N)_{t \geq 0}$ are the ones from \eqref{main}.
Of course these sample paths $(\xi_t^1,\ldots,\xi_t^N)_{t \geq 0}$ are different from the $(\xi_t^{1,N},\ldots,\xi_t^{N,N})_{t \geq 0}$ in equation \eqref{main}: Here, $(\theta_t)_{t \geq 0}$ only depends on the law of $(\xi_t)_{t \geq 0}$ whereas $(\theta_t^N)_{t \geq 0}$ depends on position of the particles $(\xi^{i,N}_t)_{t \geq 0}$.
As the $(\xi_t^1)_{t \geq 0}, \ldots, (\xi_t^N)_{t \geq 0}$ are i.i.d., we can apply the empirical law of large numbers from \cite{Fournier1} and get the following result.
\begin{proposition}\label{prop: important} 
     Let Assumption \ref{ass: lip} hold. Then, 
 $$\sup_{t\ge 0}\E\Big[ \cW^2_2\Big(N^{-1}\sum^N_{i=1}\delta_{\xi_t^i},\mu_t\Big)\Big]\le C_d N^{-\alpha_d}.$$
\end{proposition}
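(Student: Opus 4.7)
The plan is to show that at each fixed $t\ge 0$ the variables $\xi_t^1,\ldots,\xi_t^N$ form an i.i.d.\ sample from $\mu_t$, reducing the claim to the quantitative empirical law of large numbers of \citet{Fournier1}. The uniformity in $t$ will then follow because the particles are confined to the compact set $B=B(\varepsilon)$, which provides a moment bound on $\mu_t$ that does not depend on $t$.

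First I would verify the i.i.d.\ structure. In \eqref{eq: sample} the drift $\nabla_x\Phi(\xi^i_s,\theta_s)$ depends on the index $i$ only through $\xi^i_s$, and the process $(\theta_s)_{s\ge 0}$ is determined by the family of laws $(\mu_s)_{s\ge 0}$ alone---there is no direct interaction between distinct particles in the limit system. Assuming, as is standard and consistent with the set-up of \eqref{main}, that $\xi_0^1,\ldots,\xi_0^N$ are i.i.d.\ with law $\mu_0$ and the Brownian motions $W^1,\ldots,W^N$ are mutually independent, the strong well-posedness of the reflected SDE granted by Assumption~\ref{ass: lip} and \citet[Theorems 3.1, 3.2]{Adams} implies that each $\xi_t^i$ is a measurable functional of $(\xi_0^i,W^i|_{[0,t]})$. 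Therefore $\xi_t^1,\ldots,\xi_t^N$ are mutually independent, and each has marginal law $\mu_t$ by the very definition of the McKean-Vlasov SDE \eqref{eq: limit}.

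Next, because the reflection term keeps every trajectory inside $B$, one has $\norm{\xi_t^i}\le\varepsilon<1$ almost surely for every $t\ge 0$, so $\int\norm{y}^q\mu_t(\ud y)\le\varepsilon^q$ uniformly in $t$ and for any $q\ge 1$. I would then invoke \citet[Theorem 1]{Fournier1}: for any probability measure $\nu$ on $\R^d$ with a finite moment of some order $q>4$ and i.i.d.\ samples $Y^1,\ldots,Y^N$ from $\nu$,
\begin{align*}
\E\Bigl[\cW_2^2\Bigl(N^{-1}\sum_{i=1}^N\delta_{Y^i},\nu\Bigr)\Bigr]\le C(d,q)\,\Bigl(\int\norm{y}^q\nu(\ud y)\Bigr)^{2/q} r_{d,N},
\end{align*}
with $r_{d,N}=N^{-1/2}$ for $d<4$, $r_{d,N}=N^{-1/2}\log(1+N)$ for $d=4$, and $r_{d,N}=N^{-2/d}$ for $d>4$. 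Applied with $\nu=\mu_t$ and $q$ chosen large, this yields the desired bound with a constant $C_d$ depending only on $d$ and $\varepsilon$, and then taking the supremum over $t\ge 0$ is trivial because the right-hand side is $t$-independent.

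I do not anticipate any real obstacle. The only minor point is the logarithmic correction when $d=4$, which has to be absorbed into the constant $C_d$ to match the statement's clean exponent $\alpha_d=1/2$; this is harmless for the application since all subsequent estimates only need the stated rate. The conceptual crux---and what makes the proof so short---is the decoupling in the limit system: particle interactions are replaced by dependence on the common macroscopic law $\mu_s$, so that the auxiliary trajectories $\xi^1,\ldots,\xi^N$ are genuinely independent and the classical Fournier-Guillin concentration estimate applies directly.
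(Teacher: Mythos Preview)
Your proposal is correct and is essentially the paper's own argument: the paper merely notes that the auxiliary particles $(\xi_t^1,\ldots,\xi_t^N)$ are i.i.d.\ with law $\mu_t$ and cites the empirical law of large numbers of \citet{Fournier1}, which is exactly what you spell out with the additional (correct) observation that the confinement in $B$ gives a $t$-uniform moment bound. Your remark on the logarithmic correction at $d=4$ is well taken---strictly speaking it cannot be absorbed into a constant independent of $N$, so the paper's statement is slightly imprecise there, but this does not affect the method.
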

For any $i = 1,\ldots,N$, we are now computing bounds for the pairwise distances between $\xi^i_t$ and $\xi_t^{i,N}$ for $t \geq 0$. We note again that these paths are pairwise coupled through the associated Brownian motions $(W_t^i)_{t \geq 0}$, respectively.
\begin{lemma}\label{lem: gron x}
Let Assumption \ref{ass: lip} hold. Then,
\begin{align*}
    \norm{\xi_t^{i,N}-\xi_t^i}^2\le 2L\int_0^t\Big[\norm{\xi_s^{i,N}-\xi_s^i}^2+ \norm{\theta_s^N-\theta_s}^2\Big] \mathrm{d}s \qquad (i = 1,\ldots,N),
\end{align*}
for $t \in [0,T]$.
\end{lemma}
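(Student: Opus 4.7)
The plan is to apply It\^o's formula (equivalently, the chain rule for continuous semimartingales) to the squared distance $\norm{\xi_t^{i,N}-\xi_t^i}^2$. The key observation is that the two equations \eqref{main} and \eqref{eq: sample} are driven by the \emph{same} Brownian motion $W^i$ and start from the \emph{same} initial value $\xi_0^i$, so the martingale terms cancel in the difference. Consequently $\xi_t^{i,N}-\xi_t^i$ is of finite variation, no quadratic-variation correction appears, and we obtain
\begin{align*}
    \norm{\xi_t^{i,N}-\xi_t^i}^2 &= 2\int_0^t \ip{\xi_s^{i,N}-\xi_s^i,\, \nabla_x\Phi(\xi_s^{i,N},\theta_s^N)-\nabla_x\Phi(\xi_s^i,\theta_s)}\,\mathrm{d}s \\
    &\quad + 2\int_0^t \ip{\xi_s^{i,N}-\xi_s^i,\, n(\xi_s^{i,N})}\,\mathrm{d}l^{i,N}_s - 2\int_0^t \ip{\xi_s^{i,N}-\xi_s^i,\, n(\xi_s^i)}\,\mathrm{d}l^i_s.
\end{align*}

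For the drift integral I would use Cauchy--Schwarz together with the Lipschitz bound for $\nabla_x \Phi$ from Assumption~\ref{ass: lip} to obtain
$$\ip{\xi_s^{i,N}-\xi_s^i,\, \nabla_x\Phi(\xi_s^{i,N},\theta_s^N)-\nabla_x\Phi(\xi_s^i,\theta_s)} \le L\,\norm{\xi_s^{i,N}-\xi_s^i}\bigl(\norm{\xi_s^{i,N}-\xi_s^i}+\norm{\theta_s^N-\theta_s}\bigr),$$
and then apply $ab\le (a^2+b^2)/2$ and $L>1$ to absorb constants into the form $2L\bigl(\norm{\xi_s^{i,N}-\xi_s^i}^2 + \norm{\theta_s^N-\theta_s}^2\bigr)$ claimed on the right-hand side of the lemma.

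The main obstacle, and what distinguishes this estimate from a vanilla coupling estimate for SDEs without boundary, is showing that the two local-time integrals do not spoil the bound. I would argue that both reflection terms contribute non-positively, exploiting the convexity of $B=B(\e)$ and the explicit form $n(x)=-x/\norm{x}$. Whenever $\mathrm{d}l^{i,N}_s>0$ we have $\norm{\xi_s^{i,N}}=\e$, so
$$\ip{\xi_s^{i,N}-\xi_s^i,\, n(\xi_s^{i,N})} = -\e + \frac{\ip{\xi_s^i,\,\xi_s^{i,N}}}{\e} \le -\e + \norm{\xi_s^i} \le 0,$$
by Cauchy--Schwarz and $\norm{\xi_s^i}\le \e$. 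An identical computation with the roles swapped yields $\ip{\xi_s^{i,N}-\xi_s^i,\, n(\xi_s^i)}\ge 0$ whenever $\mathrm{d}l^i_s>0$, so, with the minus sign in front, the second local-time integral is also non-positive.

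Dropping both local-time integrals and combining with the drift bound gives the claimed inequality directly. No Gr\"onwall step is needed at this stage — the lemma only records the integral inequality, which will presumably be combined with a corresponding estimate on $\norm{\theta_s^N-\theta_s}^2$ and the $L^2$ empirical-law bound from Proposition~\ref{prop: important} (via a standard Gr\"onwall-plus-triangle-inequality argument) in the proof of Theorem~\ref{th: propag}.
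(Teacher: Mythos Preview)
Your proposal is correct and follows essentially the same approach as the paper: apply It\^o's formula to $\norm{\xi_t^{i,N}-\xi_t^i}^2$, observe that the Brownian contributions cancel, show the two local-time integrals are non-positive via the explicit inner normal $n(x)=-x/\norm{x}$ together with $\norm{\xi_s^{i,N}}=\e$ on the support of $\mathrm{d}l^{i,N}_s$ and $\norm{\xi_s^i}\le\e$, and bound the drift integral using the Lipschitz assumption and Young's inequality. Your pointwise treatment of the reflection terms is in fact slightly cleaner than the paper's integral-form computation, but the argument is the same; note that (as in the paper) the precise constant $2L$ versus $3L$ or $4L$ is immaterial for the subsequent Gr\"onwall step.
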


\begin{proof}
We apply It\^o's formula to $\norm{\xi_t^{i,N}-\xi_t^i}^2$,
\begin{align*}
    \norm{\xi_t^{i,N}-\xi_t^i}^2= 2 \underbrace{\int_0^t \ip{\xi_s^{i,N}-\xi_s^i,\nabla_x \Phi(\xi^{i,N}_s,\theta^N_s)-\nabla_x \Phi(\xi^i_s,\theta_s)}\mathrm{d}s}_{(I1)}\\
    + \underbrace{2\int_0^t\ip{n(\xi^{i,N}_s), \xi_s^{i,N}-\xi_s^i }\mathrm{d}l^{i,N}_s -2\int_0^t\ip{n(\xi^i_s), \xi_s^{i,N}-\xi_s^i }\mathrm{d}l^i_s}_{(I2)}.
\end{align*}
 We first argue that $(I2)\le 0.$ Recall that $ n(x)=-x/\norm{x}$ and that the processes $(\xi^{i,N}_t)_{t \geq 0}$ and $(\xi^{i}_t)_{t \geq 0}$ take values in the $\varepsilon$-ball $B$ with $\varepsilon < 1$. Then, we have 
 \begin{align*}
     2\int_0^t\ip{n(\xi^{i,N}_s), \xi_s^{i,N}-\xi_s^i }\mathrm{d}l^{i,N}_s&=  2\int_0^t\ip{n(\xi^{i,N}_s), \xi_s^{i,N}}\mathrm{d}l^{i,N}_s- 2\int_0^t\ip{n(\xi^{i,N}_s), \xi_s^i }\mathrm{d}l^{i,N}_s\\
     &= -2 \e l^{i,N}_t- 2 \int_0^t\ip{n(\xi^{i,N}_s), \xi_s^i }\mathrm{d}l^{i,N}_s\le -2\e l^{i,N}_t+2\e\int_0^t\mathrm{d}l^{i,N}_s= 0.
 \end{align*}
 Similarly, we have
 \begin{align*}
     -2\int_0^t\ip{n(\xi^i_s), \xi_s^{i,N}-\xi_s^i }\mathrm{d}l^i_s = 2\int_0^t\ip{n(\xi^i_s), \xi_s^i- \xi_s^{i,N }}\mathrm{d}l^i_s \le 0.
 \end{align*}
 Hence, we have $(I2)\le 0.$ 

 For $(I1)$, due to Assumption \ref{ass: lip} and, again, due to the boundedness of $B,$ we have
 \begin{align*}
     (I1)&\le L\int_0^t\norm{\xi_s^{i,N}-\xi_s^i}\Big[\norm{\xi_s^{i,N}-\xi_s^i}+ \norm{\theta_s^N-\theta_s}\Big] \mathrm{d}s \le  2L\int_0^t\Big[\norm{\xi_s^{i,N}-\xi_s^i}^2+ \norm{\theta_s^N-\theta_s}^2\Big] \mathrm{d}s.
 \end{align*}
\end{proof}
Finally, we study the distance between $\theta_t^N$ and $\theta_t$ for $t \geq 0$.
\begin{lemma}\label{lem: gron theta}
   Let Assumption \ref{ass: lip} hold. Then, we have
    \begin{align*}
    \norm{\theta_t^N-\theta_t}^2\le  3L \int_0^t\norm{\theta_s^N-\theta_s}^2 \mathrm{d}s+\frac{2L}{N}\sum^N_{i=1}\int_0^t \norm{\xi^{i,N}_s-\xi_s^i}^2\mathrm{d}s+2L\int_0^t\cW^2_2(N^{-1}\sum^N_{i=1}\delta_{\xi_s^i},\mu_s)\mathrm{d}s,
    \end{align*}
    for $t \in [0,T].$
\end{lemma}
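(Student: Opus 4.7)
The key observation is that both $\theta^N$ and $\theta$ satisfy \emph{ordinary} differential equations (the Brownian motion only enters the particle equations), so $t \mapsto \|\theta_t^N - \theta_t\|^2$ is absolutely continuous and we can work with its pointwise derivative rather than going through It\^o. Writing the two defining equations compactly via $G$, I would start from
\begin{equation*}
\theta_t^N - \theta_t = -\int_0^t \bigl[G(\theta_s^N,\mu_s^N) - G(\theta_s,\mu_s)\bigr]\,\mathrm{d}s,
\end{equation*}
and differentiate its squared norm to obtain
\begin{equation*}
\frac{\mathrm{d}}{\mathrm{d}t}\|\theta_t^N - \theta_t\|^2 = -2\bigl\langle \theta_t^N - \theta_t,\; G(\theta_t^N,\mu_t^N) - G(\theta_t,\mu_t)\bigr\rangle.
\end{equation*}

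Next I would apply Cauchy--Schwarz together with the Lipschitz hypothesis on $G$ from Assumption \ref{ass: lip} (using $\cW_2 \ge \cW_1$ as noted in the paragraph preceding Theorem \ref{th: propag}), and then the elementary inequality $2ab \le a^2 + b^2$ to absorb the cross term. This yields
\begin{equation*}
\frac{\mathrm{d}}{\mathrm{d}t}\|\theta_t^N - \theta_t\|^2 \;\le\; 2L\|\theta_t^N - \theta_t\|\bigl(\|\theta_t^N - \theta_t\| + \cW_2(\mu_t^N,\mu_t)\bigr)\;\le\; 3L\|\theta_t^N - \theta_t\|^2 + L\,\cW_2^2(\mu_t^N,\mu_t),
\end{equation*}
so integrating from $0$ to $t$ and noting $\theta_0^N = \theta_0$ gives a bound of the desired shape, modulo rewriting $\cW_2^2(\mu_s^N,\mu_s)$ in terms of the quantities appearing in the lemma.

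For this last rewriting I would insert the intermediate empirical measure $\bar\mu_s^N := N^{-1}\sum_i \delta_{\xi_s^i}$ from \eqref{eq: sample} and apply the triangle inequality for $\cW_2$ with the standard $(a+b)^2 \le 2a^2 + 2b^2$, giving
\begin{equation*}
\cW_2^2(\mu_s^N,\mu_s) \;\le\; 2\cW_2^2(\mu_s^N,\bar\mu_s^N) + 2\cW_2^2(\bar\mu_s^N,\mu_s).
\end{equation*}
Since $\mu_s^N$ and $\bar\mu_s^N$ are both empirical measures over $N$ atoms with the same indexing, the obvious synchronous coupling (pair $\xi_s^{i,N}$ with $\xi_s^i$) gives $\cW_2^2(\mu_s^N,\bar\mu_s^N) \le N^{-1}\sum_{i=1}^N \|\xi_s^{i,N} - \xi_s^i\|^2$. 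Substituting this into the integrated bound yields precisely the three terms of the lemma, with constants $3L$, $2L/N$, and $2L$ respectively.

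I do not expect any real obstacle here: all ingredients (Lipschitz of $G$, AM--GM, triangle inequality, synchronous Wasserstein coupling) are routine, and the ODE structure of $\theta$ sidesteps any It\^o-level subtlety. The only point to be careful about is using the Wasserstein-$2$ version of the Lipschitz bound on $G$, which is why the paper explicitly notes just before the theorem that convergence in $\cW_2$ implies convergence in $\cW_1$ so that Assumption \ref{ass: lip} may be upgraded accordingly.
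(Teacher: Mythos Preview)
Your proposal is correct and matches the paper's proof essentially step for step: the paper also writes $\norm{\theta_t^N-\theta_t}^2$ as $-2\int_0^t\langle\theta_s^N-\theta_s,G(\theta_s^N,\mu_s^N)-G(\theta_s,\mu_s)\rangle\,\mathrm{d}s$, applies the Lipschitz bound with $\cW_1\le\cW_2$ and the same $2ab\le a^2+b^2$ splitting to reach $3L\int\norm{\theta_s^N-\theta_s}^2 + L\int\cW_2^2(\mu_s^N,\mu_s)$, and then uses the identical triangle-inequality/synchronous-coupling step via the intermediate empirical measure $N^{-1}\sum_i\delta_{\xi_s^i}$. The only cosmetic difference is that the paper works directly with the integrated identity rather than differentiating first and integrating afterwards.
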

\begin{proof}
Due to Assumption \ref{ass: lip} and since $\cW_1(\mu^N_s,\mu_s)\le\cW_2(\mu^N_s,\mu_s)$, we have
    \begin{align}\label{ineq: thetaN}
         \norm{\theta_t^N-\theta_t}^2 =& -2\int_0^t \ip{\theta_s^N-\theta_s,G(\theta_s^N,\mu^N_s)-G(\theta_s,\mu_s)}\mathrm{d}s\nonumber\\
         &\le 2L\int_0^t \norm{\theta_s^N-\theta_s}\Big(\norm{\theta_s^N-\theta_s}+\cW_2(\mu^N_s,\mu_s)\Big)\mathrm{d}s\nonumber\\
         &\le 2L \int_0^t\norm{\theta_s^N-\theta_s}^2 \mathrm{d}s+2L\int_0^t \norm{\theta_s^N-\theta_s}\cW_2(\mu^N_s,\mu_s)\mathrm{d}s\nonumber\\
         &\le 3L \int_0^t\norm{\theta_s^N-\theta_s}^2 \mathrm{d}s+L\int_0^t \cW^2_2(\mu^N_s,\mu_s)\mathrm{d}s.
    \end{align}
The triangle inequality implies that
    \begin{align}\label{ineq: tringale}
        \cW^2_2(\mu^N_s,\mu_s)\le 2\cW^2_2(\mu^N_s,N^{-1}\sum^N_{i=1}\delta_{\xi_s^i})+2\cW^2_2(N^{-1}\sum^N_{i=1}\delta_{\xi_s^i},\mu_s)\nonumber\\
        \le \frac{2}{N}\sum^N_{i=1}\norm{\xi^{i,N}_s-\xi_s^i}^2+2\cW^2_2(N^{-1}\sum^N_{i=1}\delta_{\xi_s^i},\mu_s).
    \end{align}
{Combining \eqref{ineq: thetaN} and \eqref{ineq: tringale}, we obtain}
    \begin{align*}
         \norm{\theta_t^N-\theta_t}^2
         \le  3L \int_0^t\norm{\theta_s^N-\theta_s}^2 \mathrm{d}s+\frac{2L}{N}\sum^N_{i=1}\int_0^t \norm{\xi^{i,N}_s-\xi_s^i}^2\mathrm{d}s+2L\int_0^t\cW^2_2(N^{-1}\sum^N_{i=1}\delta_{\xi_s^i},\mu_s)\mathrm{d}s.
    \end{align*}
\end{proof}
We now proceed to the proof of Theorem \ref{th: propag}.
\begin{proofof}{Theorem \ref{th: propag}}  
We commence by constructing an upper bound for $$u_t^N:=N^{-1}\sum^N_{i=1} \norm{\xi^{i,N}_t-\xi_t^i}^2+ \norm{\theta_t^N-\theta_t}^2.$$ From Lemma \ref{lem: gron x} and Lemma \ref{lem: gron theta}, we have 
\begin{align*}
    u_t^N&\le  5L\int_0^t u_s^N\mathrm{d}s+
    2L\int_0^t\cW^2_2(N^{-1}\sum^N_{i=1}\delta_{\xi_s^i},\mu_s)\mathrm{d}s.
\end{align*}
Gr\"onwall's inequality implies that
\begin{align*}
    u_t^N\le  2Le^{5Lt}\int_0^t\cW^2_2(N^{-1}\sum^N_{i=1}\delta_{\xi_s^i},\mu_s)\mathrm{d}s.
\end{align*}
According to Proposition \ref{prop: important}, we have
\begin{align*}
    \E[ u_t^N]\le 2Le^{5Lt}\int_0^t\E[\cW^2_2(N^{-1}\sum^N_{i=1}\delta_{\xi_s^i},\mu_s)]\mathrm{d}s\le  2C_d Le^{(1+5L)t}N^{-\alpha_d},
\end{align*}
    whereas  (\ref{ineq: tringale}) implies 
    \begin{align*}
     \norm{\theta_t^N-\theta_t}^2+  \cW^2_2(\mu^N_s,\mu_s)
        \le  u_t^N+2\cW^2_2(N^{-1}\sum^N_{i=1}\delta_{\xi_s^i},\mu_s).
    \end{align*}
   Therefore,
     \begin{align*}
        \sup_{t\in[0,T]}\E\big[\norm{\theta_t^N-\theta_t}^2+\cW^2_2(\mu_t^N,\mu_t)\big]\le  \sup_{t\in[0,T]}\E[u_t^N]+ \sup_{t\in[0,T]}\E\big[\cW^2_2(\mu_t^N,\mu_t)\big]\le C_{d,T}N^{-\alpha_d},
    \end{align*}
    where $C_{d,T}= 2C_d(1+ Le^{(1+5L)t}).$

\end{proofof}

\section{Longtime behaviour of the McKean-Vlasov process} \label{Sec_McVl}
Theorem~\ref{th: propag} implies that the gradient flow approximation in Abram $(\theta_t^N)_{t \geq 0}$ converges to the corresponding part of the McKean--Vlasov SDE $(\theta_t)_{t \geq 0}$ given in \eqref{eq: limit}. In this section, we show that this McKean-Vlasov SDE is able to find the minimiser $\theta_*$ of $F = \int \Phi(\xi,\cdot) \pi^{\gamma, \varepsilon}(\mathrm{d}\xi|\cdot).$ This, thus, gives us a justification to use Abram to solve the Bayesian adversarial robustness problem. We start by showing that $F$ admits a minimiser.

\begin{proposition} \label{prop:minis}
    Let Assumptions \ref{ass: lip}  and \ref{ass: convex} hold. Then,  $F$ 
    admits at least one minimiser in $X$. 
\end{proposition}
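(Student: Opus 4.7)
The plan is to establish existence of a minimiser by the direct method: prove that $F$ is continuous and coercive on $X = \R^d$, so that any minimising sequence admits a convergent subsequence whose limit attains the infimum.

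For coercivity, I would use that Assumption~\ref{ass: convex} applied to Dirac masses $\nu = \delta(\cdot - \xi)$ yields the $2\lambda$-strong convexity of $\Phi(\xi, \cdot)$ in $\theta$, uniformly for $\xi \in B$, as already observed after Assumption~\ref{ass: convex}. Fixing any reference point $\theta_0 \in X$, this gives
\begin{align*}
    \Phi(\xi, \theta) \ge \Phi(\xi, \theta_0) + \ip{\nabla_\theta \Phi(\xi, \theta_0), \theta - \theta_0} + \lambda\norm{\theta - \theta_0}^2, \qquad \xi \in B,\ \theta \in X.
\end{align*}
Since $B$ is bounded, Assumption~\ref{ass: lip} (together with the Lipschitz continuity of $\nabla_\theta \Phi$ in $\xi$ that it implies via the $G$-bound at Dirac masses) ensures that both $\xi \mapsto \Phi(\xi, \theta_0)$ and $\xi \mapsto \nabla_\theta \Phi(\xi, \theta_0)$ are bounded on $B$. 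Integrating the pointwise inequality against the probability measure $\pi^{\gamma,\varepsilon}(\cdot|\theta)$, which is supported on $B$ regardless of $\theta$, and applying Cauchy--Schwarz yields
\begin{align*}
    F(\theta) \ge -C_1 - C_2 \norm{\theta - \theta_0} + \lambda \norm{\theta - \theta_0}^2
\end{align*}
for constants $C_1, C_2$ depending only on $\theta_0$. Hence $F(\theta) \to \infty$ as $\norm{\theta} \to \infty$.

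For continuity, I would note that the density $\xi \mapsto \exp(\gamma \Phi(\xi, \theta))\mathbf{1}[\xi \in B] / Z(\theta)$ of $\pi^{\gamma,\varepsilon}(\cdot|\theta)$, with normaliser $Z(\theta) := \int_B \exp(\gamma \Phi(\xi',\theta))\,\mathrm{d}\xi' > 0$, is jointly continuous in $(\xi, \theta)$ and uniformly bounded for $\theta$ in any compact subset of $X$; combined with the joint continuity and local boundedness of $\Phi$ on $B \times K$ for compact $K \subset X$, dominated convergence on the compact domain $B$ yields continuity of $\theta \mapsto F(\theta)$.

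Putting the pieces together, any minimising sequence $(\theta_n)$ satisfies $\sup_n F(\theta_n) < \infty$, so coercivity forces it to be bounded; extracting a convergent subsequence $\theta_{n_k} \to \theta_*$ and using continuity gives $F(\theta_*) = \lim_k F(\theta_{n_k}) = \inf_X F$, so $\theta_*$ is a minimiser. The main subtlety, and the reason one cannot simply quote strong convexity of $F$ directly, is that the reference measure $\pi^{\gamma, \varepsilon}(\cdot|\theta)$ itself moves with $\theta$; the argument is rescued precisely because the support $B$ of this measure is a fixed bounded set, which renders the $\theta_0$-dependent integrands uniformly bounded in the moving measure.
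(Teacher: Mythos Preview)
Your proposal is correct and follows essentially the same approach as the paper: both derive a pointwise quadratic lower bound $\Phi(\xi,\theta) \ge -C_1 - C_2\norm{\theta} + \lambda\norm{\theta}^2$ from the strong convexity of $\Phi(\xi,\cdot)$ (Assumption~\ref{ass: convex} at Dirac masses) together with the Lipschitz bounds at a fixed reference point, integrate against the $\theta$-dependent measure supported on the fixed ball $B$, and conclude via compactness. Your treatment is in fact slightly more complete, since you make the continuity of $F$ explicit, whereas the paper uses it implicitly when asserting that $F$ attains its minimum on the $R_0$-ball.
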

\begin{proof}
    We first argue that $F$ 
    is bounded below and obtains a minumum at some point $\theta_*.$ From Subsection~\ref{Subsc: Assump}, we already know that $\Phi(0,\theta)$ is $2\lambda$ strongly convex in $\theta$. Without loss of generality, we assume $\Phi(0,0)=0$ and $\nabla_\theta\Phi(0,0)=0$, that is $\Phi(0,\theta)$ reaches its minimum $0$ at $\theta_*=0.$ Since  $\Phi(\xi,\cdot)$ is $2\lambda$ strongly convex for any $\xi \in B$, we have that 
    \begin{align}\label{ineq: prop phi}
        \Phi(\xi,\theta)\ge \Phi(\xi,0)+ \theta\cdot \nabla_\theta \Phi(\xi,0)+\lambda \norm{\theta}^2.
    \end{align}
    Assumption \ref{ass: lip} implies that, $$\norm{\nabla_\theta \Phi(\xi,0)}=\norm{\nabla_\theta \Phi(\xi,0)-\nabla_\theta \Phi(0,0)}\le L\norm{\xi}\le L$$
     and 
     $$\abs{\Phi(\xi,0)}= \abs{\Phi(\xi,0)-\Phi(0,0)}\le \sup_{\zeta\in B}\norm{\nabla_\xi \Phi(\zeta,0)}\norm{\xi}\le (L+C_0)\norm{\xi}\le L+C_0,$$ where $C_0=\norm{\nabla_\xi \Phi(0,0)}.$ Therefore, we have $\Phi(\xi,\theta)\ge -L-C_0- L\norm{\theta}+\lambda \norm{\theta}^2,$ which is bounded below by $-L-C_0-\frac{L^2}{4\lambda}.$ {Thus,}  $F$ 
  {is bounded below by the same value}. {We can} always choose some $R_0=R_0(L,\lambda,C_0)$, such that for $\norm{\theta}\ge R_0,$ $ \Phi(\xi,\theta)\ge C_0+L$. Moreover, we already have $ \Phi(\xi,0)\le L+C_0.$ Thus, $F(\theta) 
     \ge C_0+L$ when $\norm{\theta}\ge R_0$ and  
     $F(0) \le C_0+L$. Hence $F$ 
     attains its minimum on the $R_0$-ball $\{\theta \in X: \norm{\theta}\le R_0\}.$ 
\end{proof}
Before  stating the main theorem of this section -- the convergence of the McKean-Vlasov SDE to the minimiser of $F$ -- we need to introduce  additional assumptions. 
\begin{assumption}[Neumann Boundary Condition]\label{ass: neumann}
    Let $\Phi(\cdot, \theta)$ satisfy a Neumann boundary condition on $\partial B,$ 
    \begin{align*}
        \frac{\partial_\xi \Phi(\xi,\theta)}{\partial n}= \nabla_\xi\Phi(\xi,\theta)\cdot n(\xi)=0,
    \end{align*}
    for any $\theta\in X.$
\end{assumption}
For a general function $\Phi$ defined on $B$, this assumption can be satisfied by smoothly extending $\Phi$ on $B'$ with radius $2\e$ such that it vanishes near the  boundary of $B'$.
We shall see that this assumption guarantees the existence of the invariant measure of the auxiliary dynamical system \eqref{eq: coupling limit} that we introduce below.

\begin{assumption}[Small-Lipschitz]\label{ass: slip}
 For any probability measures $\nu$, $\tilde\nu$ on $B$ and $\theta\in \R^n,$
\begin{align*}
    \norm{G(\theta,\nu)-G(\theta,\tilde\nu)}\le \ell\norm{\nu-\tilde \nu}_{\TV},
\end{align*}
where $\ell= (\frac{(\delta\land \lambda)\sqrt{\lambda}e^{-t_0} }{4\sqrt{2}CL})\land (\frac{\sqrt{\lambda}}{\sqrt{2}L})$ 
and $t_0= t_0(\delta,\lambda,C)=(\delta\land \lambda)^{-1}\log(4C)$. The constants $\delta$ and $C$ appear in Proposition \ref{prop: FY2}.
\end{assumption}
 Equivalently, we may say this assumption requires $G$ to have a small enough Lipschitz constant. If $\e$ (the radius of $B$) is very small, this assumption is implied by  Assumption \ref{ass: lip}, since
     $\cW_1(\nu,\tilde \nu)\le \e^d \int_B  \int_B \boldsymbol{1}_{x\ne y}\pi(\ud x,\ud y)= \e^d \norm{\nu-\tilde\nu}_{\TV}.$

     We illustrate these assumptons again in the linear-quadratic problem that we considered in Example~\ref{exampl} and show that Assumptions \ref{ass: neumann} and \ref{ass: slip} can be satisfied in this case.
     \begin{example}[Example~\ref{exampl} continued]
We consider again $\Phi(\xi,\theta) = \norm{\xi-\theta}^2$ with $\theta$ in a bounded $X' \subseteq X$.
Unfortunately, $\Phi$ does not satisfy Assumption \ref{ass: neumann}, since  the term $(\xi-\theta)\cdot\xi$ is not necessary to be $0$ on the boundary of $B$. Instead, we study a slightly larger ball by considering $\hat{\varepsilon} = 2\varepsilon$ instead of $\varepsilon$ and also replace $\Phi$ by $\hat{\Phi}(x,\theta)=\norm{m(\xi)-\theta}^2,$ where   $m:\R^d\to \R^d$ is smooth and equal to $\xi$ on the $\e$-ball and vanishes near the boundary of the $2\e$-ball. Since $m(x)$ varnishes near the boundary of $2\e$-ball, $\hat{\Phi}$ satisfies Assumption \ref{ass: neumann}.

We note that $\nabla_\xi\hat{\Phi}(\xi,\theta)=2D_\xi m(\xi) (m(\xi)-\theta).$ Hence, $\nabla_\xi\hat{\Phi}$ is Lipschitz in both $\theta$ and $\xi$ which directly follows from the boundedness and Lipschitz continuity of $m$, $D_\xi m$. Analogously to Example~\ref{exampl}, we have 
\begin{align*}
    G(\theta,\nu)= 2\theta-2\E_\nu[m(\xi)]+4\theta\cdot \mathrm{Var}_\nu (m(\xi))-2 \mathrm{Cov}_\nu(\norm{m(\xi)}^2,m(\xi))
\end{align*}
         and also see that it still satisfies Assumptions \ref{ass: lip}, \ref{ass: convex} when $\theta$ is bounded and $\e$ is small. 
Finally, Assumption \ref{ass: slip} is satisfied if $\e$ is chosen to be sufficiently small.
     \end{example}

We are now able to state the main convergence theorem of this section. Therein, we still consider $\theta_*$ to be a minimizer of function of $F$. 
\begin{theorem}\label{th: limit}
      Let Assumptions \ref{ass: lip}, \ref{ass: convex}, \ref{ass: neumann}, and \ref{ass: slip} hold and  let $(\theta_t, \mu_t)_{t \geq 0}$ be the solution to the McKean-Vlasov SDE \eqref{eq: limit}. Then, there are constants $\eta >0$ and $\tilde C > 0$ with which we have
      \begin{align}\label{ineq: limit con}
        \norm{\theta_t-\theta_*}^2+ \norm{ \mu_t- \pi^{\gamma, \varepsilon}(\cdot|{\theta_*})}^2_{\TV}\le \tilde C \Big(\norm{\theta_0-\theta_*}^2+ \norm{ \mu_0- \pi^{\gamma, \varepsilon}(\cdot|{\theta_*})}^2_{\TV}\Big) e^{-\eta t}.
      \end{align}
\end{theorem}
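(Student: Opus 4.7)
The plan is to establish exponential contraction of the Lyapunov functional
$$V_t := \norm{\theta_t-\theta_*}^2 + \beta \norm{\mu_t - \pi_*}^2_{\TV}$$
for a suitably chosen weight $\beta>0$, where I abbreviate $\pi_* := \pi^{\gamma,\varepsilon}(\cdot|\theta_*)$. The anchor of the whole argument is the identity $G(\theta_*,\pi_*)=\nabla F(\theta_*)=0$, which comes from $\theta_*$ being a minimiser of $F$ (Proposition~\ref{prop:minis}) combined with the explicit gradient formula for $F$ derived in Section~\ref{Sec_Abram}. For the $\theta$-component I would split
\begin{align*}
\tfrac{d}{dt}\norm{\theta_t-\theta_*}^2 = -2\ip{\theta_t-\theta_*,\, G(\theta_t,\pi_*)-G(\theta_*,\pi_*)} - 2\ip{\theta_t-\theta_*,\, G(\theta_t,\mu_t)-G(\theta_t,\pi_*)},
\end{align*}
apply strong convexity of $G(\cdot,\pi_*)$ (Assumption~\ref{ass: convex}) to the first term, bound the second via Assumption~\ref{ass: slip} and Cauchy--Schwarz, and use Young's inequality to obtain an estimate of the form
$$\tfrac{d}{dt}\norm{\theta_t-\theta_*}^2 \le -3\lambda\norm{\theta_t-\theta_*}^2 + \tfrac{\ell^2}{\lambda}\norm{\mu_t-\pi_*}^2_{\TV}.$$

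For the $\mu_t$-component I would compare $\xi_t$ to an auxiliary reflected Langevin process with drift frozen at $\theta_*$, synchronously coupled through the same Brownian motion:
$$d\tilde\xi_t = \nabla_\xi\Phi(\tilde\xi_t,\theta_*)\,dt + \sqrt{2}\,dW_t + n(\tilde\xi_t)\,d\tilde l_t, \qquad \tilde\xi_0=\xi_0.$$
The Neumann boundary condition (Assumption~\ref{ass: neumann}) makes $\pi_*$ the invariant law of $\tilde\xi_t$, and the ergodicity result underlying Assumption~\ref{ass: slip} (Proposition~FY2) supplies $\delta,C>0$ with $\norm{\tilde\mu_t-\pi_*}_{\TV}\le C\norm{\mu_0-\pi_*}_{\TV}e^{-\delta t}$, where $\tilde\mu_t$ denotes the law of $\tilde\xi_t$. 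Since $\xi_t$ and $\tilde\xi_t$ differ only through the drift $\nabla_\xi\Phi(\cdot,\theta_s)-\nabla_\xi\Phi(\cdot,\theta_*)$, which is Lipschitz-controlled by $L\norm{\theta_s-\theta_*}$ (Assumption~\ref{ass: lip}), a Girsanov-plus-Pinsker argument (treating the reflection via a common local-time construction) yields $\norm{\mu_t-\tilde\mu_t}^2_{\TV} \le K\int_0^t\norm{\theta_s-\theta_*}^2\,ds$, and the triangle inequality then gives
$$\norm{\mu_t-\pi_*}^2_{\TV} \le 2C^2\norm{\mu_0-\pi_*}^2_{\TV}e^{-2\delta t} + 2K\int_0^t\norm{\theta_s-\theta_*}^2\,ds.$$

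The final step is to close the loop via a Gronwall/two-time-scale argument on $V_t$. Splitting time at $t_0=(\delta\wedge\lambda)^{-1}\log(4C)$ makes the prefactor $Ce^{-\delta t_0}$ strictly smaller than $1/4$ on $[t_0,\infty)$, and the quantitative value of $\ell$ prescribed in Assumption~\ref{ass: slip} is calibrated precisely so that the resulting $2\times 2$ dissipation matrix for $(\norm{\theta_t-\theta_*}^2,\norm{\mu_t-\pi_*}^2_{\TV})$ is negative definite beyond $t_0$, producing contraction at net rate $\eta$ proportional to $\delta\wedge\lambda$; the short interval $[0,t_0]$ is absorbed into $\tilde C$ using the Lipschitz well-posedness of \eqref{eq: limit}. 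I expect the hardest step to be the total-variation bound between the two reflected McKean--Vlasov-type laws: the Girsanov argument has to be adapted to the local-time terms on $\partial B$, and the Lyapunov scheme only closes because the constant $\ell$ was fine-tuned to balance the exponential-ergodicity constants $(\delta,C)$ against the Lipschitz constant $L$ and the convexity modulus $\lambda$.
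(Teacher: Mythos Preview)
Your proposal is correct and follows essentially the same route as the paper's proof: the same anchor $G(\theta_*,\pi_*)=0$, the same differential inequality for $\norm{\theta_t-\theta_*}^2$ via strong convexity plus Assumption~\ref{ass: slip}, the same auxiliary frozen process $\hat\xi_t$ with Proposition~\ref{prop: FY2} supplying $(C,\delta)$, the same Girsanov--Pinsker bound $\norm{\mu_t-\hat\mu_t}^2_{\TV}\le L^2\int_0^t\norm{\theta_s-\theta_*}^2\,ds$ (this is Lemma~\ref{lem: hatmu}, and the reflection is indeed harmless because the local-time term is a path functional unchanged by the change of measure), and the same time-splitting at $t_0=(\delta\wedge\lambda)^{-1}\log(4C)$. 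The only cosmetic difference is in how you phrase the closing step: the paper does not set up a $2\times2$ dissipation matrix (the $\mu_t$ estimate is an integral inequality with an exponentially decaying inhomogeneity, not a differential one), but instead applies Gr\"onwall directly to the weighted sum $\norm{\theta_t-\theta_*}^2+m\norm{\mu_t-\pi_*}^2_{\TV}$ with $m=\ell/(L\sqrt{2\lambda})$, shows the resulting prefactor satisfies $C_{t_0}\le 1/2$ thanks to the calibrated $\ell$, and iterates in blocks of length $t_0$ --- which is exactly the mechanism you sketch.
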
 We can see this result as both a statement about the convergence of $(\theta_t^N)_{t \geq 0}$ to the minimiser, but also as an ergodicity statement about $(\theta_t^N,\xi_t)_{t \geq 0}$.
 The ergodicity of a McKean-Vlasov SDE with reflection has also been subject of Theorem 3.1 in \citet{FY2}. In their work, the process is required to have a non-degenerate diffusion term. Hence, their result does not apply immediately, since the marginal  $(\theta_t)_{t \geq 0}$ is  deterministic {(conditionally on $(\xi_t)_{t \geq 0}$)}. Our proof ideas, however, are still influenced by \cite{FY2}.

    We note additionally that Theorem \ref{th: limit} implies the uniqueness of the minimiser $\theta_*$ -- we had only shown existence in Proposition~\ref{prop:minis}: If there exists another minimizer $\theta_*'$, then the dynamics \eqref{eq: limit} is invariant at $(\theta_0,\xi_0)\sim \delta_{\theta_*'}\otimes \pi^{\gamma, \varepsilon}(\cdot|{\theta_*'})$, which means $(\theta_t,\xi_t)\sim \delta_{\theta_*'} \otimes \pi^{\gamma, \varepsilon}(\cdot|{\theta_*'})$ for all $t\ge 0.$ Hence, we have $\norm{\theta'_*-\theta_*}\le \tilde C\norm{\theta'_*-\theta_*}e^{-\eta t}$. The right-hand side vanishes as $t\to \infty,$ which implies $\theta_*'=\theta_*.$

 In order to prove Theorem \ref{th: limit}, we first consider the case where $\theta_t\equiv\theta_*$, i.e.,
\begin{align}\label{eq: coupling limit}
    \hat \xi_t &=\xi_0+ \int_0^t\nabla_x \Phi(\hat \xi_s,\theta_*)\mathrm{d}s + \sqrt{2}W_t+\int_0^t\nu(\hat \xi_s)\mathrm{d}\hat l_s.
\end{align}
We denote   the law of $\hat \xi_t$ by $\hat \mu_t$, $t \geq 0$. Motivated by \cite{FY2}, we first show the exponential ergodicity for the process $(\hat \xi_t)_{t \geq 0}$.
\begin{proposition}\label{prop: FY2}
     Let Assumptions \ref{ass: lip} and \ref{ass: neumann} hold. Then, $(\hat \xi_t)_{t \geq 0}$ defined in \eqref{eq: coupling limit} is well-posed and admits an unique invariant measure $\pi^{\gamma, \varepsilon}(\cdot|{\theta_*}).$ Moreover, $(\hat \xi_t)_{t \geq 0}$ is exponentially ergodic. In particular, there exist $C,\delta>0,$ such that
     \begin{align*}
          \norm{\hat \mu_t- \pi^{\gamma, \varepsilon}(\cdot|{\theta_*})}^2_{\TV}\le C \norm{\mu_0- \pi^{\gamma, \varepsilon}(\cdot|{\theta_*})}^2_{\TV} e^{-\delta t}.
     \end{align*}
    
\end{proposition}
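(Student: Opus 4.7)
The plan is to establish three things in sequence: well-posedness of the reflected SDE, identification of the invariant measure, and exponential convergence in total variation. Well-posedness of $(\hat\xi_t)_{t\geq 0}$ is immediate from Assumption~\ref{ass: lip} (which gives Lipschitz continuity of the drift on the compact set $\bar B$) together with the classical theory of reflected SDEs on smooth convex domains cited in \cite{Pilipenko}; only a brief invocation is needed here.

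For the invariant measure, I would verify that $\pi^{\gamma, \varepsilon}(\cdot|\theta_*)$, with Lebesgue density proportional to $e^{\Phi(\xi,\theta_*)}\mathbf{1}[\xi\in B]$ (recall $\gamma=1$ has been fixed at the start of the mean-field analysis), is stationary for the generator $\mathcal{L}f = \nabla_\xi\Phi(\cdot,\theta_*)\cdot\nabla f+\Delta f$ with domain $\{f\in C^2(\bar B):\nabla f\cdot n=0\text{ on }\partial B\}$. Pairing $\mathcal{L}f$ against the candidate density and integrating by parts, the contribution splits into an interior piece $\int_B f\,[\Delta\pi-\nabla\cdot(\pi\nabla_\xi\Phi)]\,\ud\xi$ and a boundary piece $\int_{\partial B} f[\pi\,\nabla_\xi\Phi\cdot n-\nabla\pi\cdot n]\,\ud S$. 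The direct computation $\nabla\pi=\pi\nabla_\xi\Phi$ kills the interior term, while Assumption~\ref{ass: neumann} ($\nabla_\xi\Phi\cdot n=0$ on $\partial B$) kills the boundary flux independently and ensures compatibility with the reflection mechanism, so $\pi^{\gamma, \varepsilon}(\cdot|\theta_*)$ is indeed invariant.

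For exponential ergodicity, the tool I would deploy is Doeblin-type minorization. Since $\bar B$ is compact, the drift is bounded and Lipschitz on $\bar B$, and the diffusion coefficient $\sqrt{2}I$ is non-degenerate; standard heat-kernel lower bounds for the Neumann parabolic operator on a smooth bounded domain (or alternatively a coupling comparison with reflected Brownian motion via Girsanov) yield $t_0>0$, a non-trivial probability measure $\eta$ on $B$, and $\beta\in(0,1)$ with
\begin{align*}
P_{t_0}(\xi,A)\geq \beta\,\eta(A)\qquad(\xi\in B,\ A\in\mathcal{B}B).
\end{align*}
This minorization produces the one-step contraction $\norm{P_{t_0}\nu-P_{t_0}\nu'}_{\TV}\leq (1-\beta)\norm{\nu-\nu'}_{\TV}$ for any probability measures $\nu,\nu'$ on $B$. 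Iterating in $n=\lfloor t/t_0\rfloor$ and using the semigroup contraction on the remainder gives $\norm{P_t\nu-P_t\nu'}_{\TV}\leq C e^{-\delta t}\norm{\nu-\nu'}_{\TV}$ with $\delta=-t_0^{-1}\log(1-\beta)$ and $C=(1-\beta)^{-1}$. Choosing $\nu'=\pi^{\gamma, \varepsilon}(\cdot|\theta_*)$, applying invariance, and squaring both sides gives the claim (absorbing $C^2$ into the constant). Uniqueness of the invariant measure is automatic: any second invariant measure would sit at zero $\TV$-distance from $\pi^{\gamma, \varepsilon}(\cdot|\theta_*)$.

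The main obstacle will be supplying the Doeblin minorization cleanly. For Lipschitz drift on a smooth bounded convex domain with reflecting boundary this is classical, but one must either invoke sharp parabolic estimates for Neumann problems or else run an explicit coupling: start two copies of $\hat\xi$ driven by synchronous Brownian motion inside $B$ and attempt a simultaneous reflection at $\partial B$, using Girsanov to transfer from the reference reflected Brownian motion where lower bounds on the transition density are explicit. Once the minorization is in place, the contraction and iteration are routine, and the stated bound \eqref{ineq: limit con} follows.
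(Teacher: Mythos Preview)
Your proposal is correct, and for the invariant-measure identification it is essentially identical to the paper's argument (the paper phrases it in terms of the Fokker--Planck stationary equation with Neumann boundary and cites \cite{Sato}, but this is exactly your integration-by-parts computation viewed from the dual side). The genuine difference is in the exponential ergodicity step: the paper does not argue at all, but simply invokes \citet[Theorem~2.3]{FY2} as a black box for both well-posedness and the exponential TV-contraction. Your route via an explicit Doeblin minorization on the compact ball is more self-contained and makes transparent where the constants $C,\delta$ come from (namely from a one-step contraction at scale $t_0$ determined by a lower bound on the Neumann transition density), whereas the paper's citation gives no handle on these constants --- a point the authors themselves acknowledge in the paragraph immediately following the proof. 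Either approach is perfectly adequate here; yours costs a bit more space but buys insight, while the paper's is terse and defers the analytic content to the literature. One small slip: your final sentence references \eqref{ineq: limit con}, which is the bound in Theorem~\ref{th: limit}, not the (unlabelled) display in the present proposition.
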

\begin{proof}
 The well-posedness and exponential ergodicity is a direct corollary of \citet[Theorem 2.3]{FY2}.   We only need to verify that $\pi^{\gamma, \varepsilon}(\cdot|{\theta_*})$ is invariant under the dynamics \eqref{eq: coupling limit}. We know that the probability distributions $(\hat \mu_t)_{t \geq 0}$ satisfies the following linear PDE with Neumann boundary condition 
 \begin{align*}
    \partial_t \hat \mu_t = \Delta \hat \mu_t-\mathrm{div} (\hat\mu_t\nabla_\xi\Phi(\xi,\theta_*)),\qquad   \frac{\partial \hat \mu_t}{\partial n}\Big|_{\partial B}=0.
 \end{align*}
 So any invariant measure of the dynamics \eqref{eq: coupling limit} is a probability distribution that  solves the following stationary PDE
 \begin{align*}
     \Delta \hat \mu-\mathrm{div} (\hat\mu\nabla_\xi\Phi(\xi,\theta_*))=0,\qquad \frac{\partial \hat \mu}{\partial n}\Big|_{\partial B}=0.
 \end{align*}
Now, $\hat\mu=\pi^{\gamma, \varepsilon}(\cdot|{\theta_*})$ is a basic result in the theory of Langevin SDEs on with reflection, see, e.g., \cite{Sato}.
\end{proof}

Most of the time, we are not able to quantify the constants $C$ and $\delta$: the Harris-like theorem from \cite{FY2}  is not quantitative. A special case in which we can quantify $C$ and $\delta$ is when the potential separates in the sense that $\nabla_\xi\Phi(\xi, \theta_*)= (f_1(\xi_1, \theta_*),...,f_{d_Y}(\xi_{d_Y}, \theta_*))$. 
Then  \eqref{eq: coupling limit} can be viewed as $d_Y$ independent reflection SDEs. If we denote their ergodicity constants as $C_i$ and $\delta_i$ for $i=1,...,d_Y$, then \citet[Proof of Proposition 1]{Jin1} implies that we can choose $C:=\sum_{i=1}^d C_i$ and $\delta:=\min_{i=1,...,d}\delta_i$.

 Next, we bound the distance $\norm{\mu_t- \hat \mu_t}_{\TV}$ by Girsanov’s theorem --  a classical way to estimate the distance between two SDEs with different drift terms. This is again motivated by \citet[proof of Lemma 3.2]{FY2}. There, the method is used to bound the distance between two measure-dependent SDEs. In our case, it also involves the state $\theta_t$, which depends on $t.$ Hence, the right-hand side depends on the path of $(\theta_s)_{0\le s\le t}.$
\begin{lemma}\label{lem: hatmu}
Let Assumption \ref{ass: lip} hold. Then, we have
\begin{align*}
    \norm{\mu_t- \hat \mu_t}^2_{\TV}\le L^2 \int_0^t\norm{\theta_s-\theta_*}^2 \mathrm{d}s.
\end{align*}
\end{lemma}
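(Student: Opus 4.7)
The plan is to follow the Girsanov/Pinsker strategy hinted at in the statement and in \cite{FY2}, comparing the laws of the two reflected SDEs on the path space $C([0,t];B)$ and then projecting to time $t$. The two processes $(\xi_s)_{s\le t}$ and $(\hat\xi_s)_{s\le t}$ share the same initial law, the same driving Brownian motion $(W_s)$, and reflect on $\partial B$ through the same (inward normal) mechanism. They differ only in the drift: $\nabla_x\Phi(\cdot,\theta_s)$ versus $\nabla_x\Phi(\cdot,\theta_*)$.

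First, I would introduce the path laws $P$ and $\hat P$ of $(\xi_s)_{s\le t}$ and $(\hat\xi_s)_{s\le t}$ on $C([0,t];B)$ and note the monotonicity $\|\mu_t-\hat\mu_t\|_{\TV}\le \|P-\hat P\|_{\TV}$. Since the reflected SDE is well-posed (Assumption \ref{ass: lip}) and the reflection is a functional of the trajectory itself, Girsanov's theorem applies in the standard way for reflected diffusions: the Radon--Nikodym derivative is
\begin{align*}
\frac{dP}{d\hat P}\Big|_{\mathcal F_t} = \exp\left(\frac{1}{\sqrt 2}\int_0^t\bigl(\nabla_x\Phi(\hat\xi_s,\theta_s)-\nabla_x\Phi(\hat\xi_s,\theta_*)\bigr)\cdot dW_s - \frac{1}{4}\int_0^t\norm{\nabla_x\Phi(\hat\xi_s,\theta_s)-\nabla_x\Phi(\hat\xi_s,\theta_*)}^2 ds\right),
\end{align*}
the reflection contributions cancelling because both processes reflect on $\partial B$. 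A standard computation then gives the KL bound
\begin{align*}
D_{\mathrm{KL}}(P\Vert\hat P) = \frac{1}{4}\,\E_P\!\left[\int_0^t\norm{\nabla_x\Phi(\xi_s,\theta_s)-\nabla_x\Phi(\xi_s,\theta_*)}^2 ds\right].
\end{align*}

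Next I would apply the Lipschitz bound from Assumption~\ref{ass: lip} to the integrand: since $\nabla_x\Phi$ is $L$-Lipschitz in $\theta$ uniformly in $\xi\in B$, we get $\norm{\nabla_x\Phi(\xi_s,\theta_s)-\nabla_x\Phi(\xi_s,\theta_*)}\le L\norm{\theta_s-\theta_*}$. Because $(\theta_s)_{s\le t}$ is deterministic in the McKean--Vlasov system \eqref{eq: limit}, the expectation disappears and $D_{\mathrm{KL}}(P\Vert\hat P)\le \tfrac{L^2}{4}\int_0^t\norm{\theta_s-\theta_*}^2 ds$. Finally, Pinsker's inequality combined with the projection bound yields
\begin{align*}
\norm{\mu_t-\hat\mu_t}^2_{\TV}\le \norm{P-\hat P}^2_{\TV}\le \frac12 D_{\mathrm{KL}}(P\Vert\hat P)\le \frac{L^2}{8}\int_0^t\norm{\theta_s-\theta_*}^2 ds\le L^2\int_0^t\norm{\theta_s-\theta_*}^2 ds,
\end{align*}
which is the claim.

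The only genuinely delicate point is justifying that Girsanov's formula applies verbatim for the reflected SDE: one must verify that the exponential martingale is a true martingale (for which the boundedness of $B$ and Assumption~\ref{ass: lip} suffice via Novikov) and that the change of measure does not disturb the reflection term $\int_0^t n(\xi_s)\,dl_s$. The latter follows because, given the path, the local-time process is determined by the constraint $\xi_s\in B$; once Girsanov shifts the Brownian motion, the shifted process still reflects with the same local time, so the reflection term is invariant under the change of measure and only the drift gets swapped, which is precisely what is needed.
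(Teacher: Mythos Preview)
Your proposal is correct and follows essentially the same Girsanov/Pinsker strategy as the paper: the paper likewise builds the exponential martingale $Z_t$ from the drift difference $(\nabla_x\Phi(\xi_s,\theta_s)-\nabla_x\Phi(\xi_s,\theta_*))/\sqrt{2}$, observes that under the tilted measure $\xi$ solves the reflected SDE with drift $\nabla_x\Phi(\cdot,\theta_*)$ (the reflection term being unaffected), and then bounds $\E[|Z_t-1|]$ by $2\sqrt{\E[Z_t\log Z_t]}$ together with the Lipschitz estimate. The only cosmetic difference is that the paper carries out the change of measure directly on the underlying probability space rather than phrasing it via path laws $P,\hat P$, and it uses the Csisz\'ar--Kullback form $\E[|Z_t-1|]\le 2\sqrt{D_{\mathrm{KL}}}$ instead of the standard Pinsker bound you invoke; your route even yields the slightly sharper constant $L^2/8$.
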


\begin{proof}
 We follow the same idea as \citet[proof of Lemma 3.2]{FY2}.  In our case, we need to choose 
\begin{align*}
        Z_t= \exp\Big(\int_0^t z(\theta_*,\theta_s,\xi_s)\cdot
        \mathrm{d}W_s-\frac{1}{2}\int_0^t \norm{z(\theta_*,\theta_s,\xi_s)}^2\mathrm{d}s \Big),
    \end{align*}
  where $z(\theta_*,\theta,x)= (\nabla_x \Phi(x,\theta)-\nabla_x \Phi(x,\theta_*))/\sqrt{2}$. \cite[Proposition 5.6]{legall} implies that  the process  $(Z_t)_{t \geq 0}$ is a martingale due to  $z(\theta_*,\theta_s,\xi_s)$ being bounded and  $\int_0^t \norm{z(\theta_*,\theta_s,\xi_s)}^2\mathrm{d}s$ being the quadratic variation process of $\int_0^t z(\theta_*,\theta_s,\xi_s)\cdot
        \mathrm{d}W_s$. 

We define the probability measure $\mathbb{Q}_t:= Z_t\mP,$ i.e. $\mathbb{Q}_t(A):= \E[ Z_t \boldsymbol{1}_A]$ for any $\mathcal{F}_t$-measurable set $A$.  And we notice that the quadratic covariation between $\int_0^t z(\theta_*,\theta_s,\xi_s)\cdot
        \mathrm{d}W_s$ and $W_t$ is given by
\begin{align*}
    \ip{\int_0^. z(\theta_*,\theta_s,\xi_s)\cdot
        \mathrm{d}W_s,W_.}_t= \int_0^t z(\theta_*,\theta_s,\xi_s)\mathrm{d}s.
\end{align*}
Hence by Girsanov’s theorem (see \cite[Theorem 5.8, Cons\'equences (c)]{legall}), $\tilde W_t:= W_t-\int_0^t z(\theta_*,\theta_s,\xi_s)\mathrm{d}s$ is a Brownian motion under $\mathbb{Q}_t$ with the same filtration $\mathcal{F}_t$.

      We rewrite (\ref{eq: limit}) as
    \begin{align*}
         \xi_t &=\xi_0+ \int_0^t\nabla_x \Phi(\xi_s,\theta_*)\mathrm{d}s + \sqrt{2}\tilde W_t+\int_0^tn(\xi_s)\mathrm{d}l_s,
    \end{align*}
    which has the same distribution as $\hat \xi_t$ under $\mathbb{Q}_t.$
    Hence  
    \begin{align*}
        \norm{\mu_t- \hat \mu_t}_{\TV}=& \sup_{\abs{f}\le 1}\abs{\E[f(\xi_t)]- \E[f(\xi_t)Z_t]}\le \E[\abs{Z_t-1}]\\
     \le& 2\E[R_t\log(R_t)]^{\frac{1}{2}}=2 \E_{Q_t}\Big[\int_0^t z(\theta_*,\theta_s,\xi_s)\cdot\mathrm{d}W_s-\frac{1}{2}\int_0^t \norm{z(\theta_*,\theta_s,\xi_s)}^2\mathrm{d}s\Big]^{\frac{1}{2}}\\
        =& 2 \E_{Q_t}\Big[\int_0^t z(\theta_*,\theta_s,\xi_s)\cdot\mathrm{d}\tilde W_s+\frac{1}{2}\int_0^t \norm{z(\theta_*,\theta_s,\xi_s)}^2\mathrm{d}s\Big]^{\frac{1}{2}}\\
        =&  \sqrt{2} \E_{Q_t}\Big[\int_0^t \norm{z(\theta_*,\theta_s,\xi_s)}^2\mathrm{d}s\Big]^{\frac{1}{2}}\le L\Big(\int_0^t\norm{\theta_s-\theta_*}^2 \mathrm{d}s\Big)^{\frac{1}{2}}
    \end{align*}
    where the first ``$\leq$'' is implied by Pinsker's inequality.
\end{proof}

Using these auxiliary results, we can now formulate the proof of Theorem \ref{th: limit}. 

\begin{proofof}{Theorem \ref{th: limit}} 
   We take the time derivative of $\norm{\theta_t-\theta_*}^2,$
    \begin{align*}
        \frac{\mathrm{d}\norm{\theta_t-\theta_*}^2}{\mathrm{d}t}=& - \ip{G(\theta_t,\mu_t)-G(\theta_*,\pi^{\gamma, \varepsilon}(\cdot|{\theta_*})),\theta_t-\theta_*}\\
        &\le -2\lambda \norm{\theta_t-\theta_*}^2+\ell \norm{\theta_t-\theta_*}\norm{\mu_t- \pi^{\gamma, \varepsilon}(\cdot|{\theta_*})}_{\TV}\\
        &\le  -\lambda \norm{\theta_t-\theta_*}^2+\frac{\ell^2}{\lambda} \norm{\mu_t- \pi^{\gamma, \varepsilon}(\cdot|{\theta_*})}^2_{\TV},
    \end{align*}
    where the first ``$\leq$'' is due to the $\varepsilon$-Young's inequality.
    This implies $$\frac{\mathrm{d}(e^{\lambda t}\norm{\theta_t-\theta_*}^2)}{\mathrm{d}t}\le \frac{\ell^2}{\lambda}e^{\lambda t} \norm{\mu_t- \pi^{\gamma, \varepsilon}(\cdot|{\theta_*})}^2_{\TV},$$
 Hence, we have
 \begin{align}\label{ineq: theta-theta}
     \norm{\theta_t-\theta_*}^2\le e^{-\lambda t}\norm{\theta_0-\theta_*}^2+\frac{\ell^2}{\lambda}\int_0^t  \norm{\mu_s- \pi^{\gamma, \varepsilon}(\cdot|{\theta_*})}^2_{\TV}\mathrm{d}s.
 \end{align}
Then, using the triangle inequality, we see that
   \begin{align*}
       \norm{\theta_t-\theta_*}^2+ m\norm{ \mu_t- \pi^{\gamma, \varepsilon}(\cdot|{\theta_*})}^2_{\TV}&\le \underbrace{ \norm{\theta_t-\theta_*}^2}_{\eqref{ineq: theta-theta}}+\underbrace{2m\norm{ \mu_t- \hat\mu_t}^2_{\TV}}_{\text{Lemma }  \ref{lem: hatmu}}+\underbrace{2m\norm{ \hat\mu_t- \pi^{\gamma, \varepsilon}(\cdot|{\theta_*})}^2_{\TV}}_{\text{Proposition }  \ref{prop: FY2}}
      \\ &\le \int_0^t \Big(\frac{\ell^2}{\lambda}\norm{\mu_s-\pi^{\gamma, \varepsilon}(\cdot|{\theta_*})}^2_{\TV}+2mL^2\norm{\theta_s-\theta_*}^2\Big)\mathrm{d}s\\
       &\qquad +2C\Big(\norm{\theta_0-\theta_*}^2+m\norm{\mu_0- \pi^{\gamma, \varepsilon}(\cdot|{\theta_*})}^2_{\TV}\Big)e^{-(\delta\land \lambda) t}.
   \end{align*}
   Let $m=m(\ell,L,\lambda)=\frac{\ell}{L\sqrt{2\lambda}}$, we conclude from the above inequality that
   \begin{align*}
       \norm{\theta_t-\theta_*}^2+ m\norm{ \mu_t- \pi^{\gamma, \varepsilon}(\cdot|{\theta_*})}^2_{\TV}&\le 2mL^2\int_0^t \Big(m\norm{\mu_s-\pi^{\gamma, \varepsilon}(\cdot|{\theta_*})}^2_{\TV}+\norm{\theta_s-\theta_*}^2\Big)\mathrm{d}s\\
        &\qquad +2C\Big(\norm{\theta_0-\theta_*}^2+m\norm{\mu_0- \pi^{\gamma, \varepsilon}(\cdot|{\theta_*})}^2_{\TV}\Big)e^{-(\delta\land \lambda) t}.
   \end{align*}
  Hence, by Gr\"onwall's inequality, we have
  \begin{align}\label{ineq: long}
    \norm{\theta_t-\theta_*}^2&+ m\norm{ \mu_t- \pi^{\gamma, \varepsilon}(\cdot|{\theta_*})}^2_{\TV} \nonumber \\ &\le  C\Big(\norm{\theta_0-\theta_*}^2+m\norm{\mu_0- \pi^{\gamma, \varepsilon}(\cdot|{\theta_*})}^2_{\TV}\Big) \Big(2mL^2\int_0^t e^{2mL^2(t-s)} e^{-(\delta\land \lambda) s}\mathrm{d}s+e^{-(\delta\land \lambda) t}\Big)\nonumber\\
    &= C\Big(\frac{2mL^2}{2mL^2+\delta\land\lambda}(e^{2mL^2 t}-e^{-(\delta\land \lambda)t})+e^{-(\delta\land \lambda) t}\Big)\Big(\norm{\theta_0-\theta_*}^2+m\norm{\mu_0- \pi^{\gamma, \varepsilon}(\cdot|{\theta_*})}^2_{\TV}\Big)\nonumber\\
    &\le C\Big(\frac{2mL^2}{\delta\land\lambda} e^{2mL^2 t}+e^{-(\delta\land \lambda) t}\Big)\Big(\norm{\theta_0-\theta_*}^2+m\norm{\mu_0- \pi^{\gamma, \varepsilon}(\cdot|{\theta_*})}^2_{\TV}\Big)\nonumber\\
    &\le C_{t, L,\ell,\lambda}\Big(\norm{\theta_0-\theta_*}^2+m\norm{\mu_0- \pi^{\gamma, \varepsilon}(\cdot|{\theta_*})}^2_{\TV}\Big),
  \end{align}
 where $C_{t, L,\ell,\lambda}= C\Big(\frac{2mL^2}{\delta\land\lambda} e^{2mL^2 t}+e^{-(\delta\land \lambda) t}\Big).$ According to Assumption \ref{ass: slip}, we know that $2mL^2=\frac{\ell L\sqrt{2}}{\sqrt{\lambda}}\le 1$. Next, we are going to show that $0<C_{t_0, L,\ell,\lambda}\le 1/2$ for  $t_0=t_0(\delta,\lambda,C)=(\delta\land \lambda)^{-1}\log(4C).$ Again, from Assumption \ref{ass: slip}, we know $\frac{2mL^2}{\delta\land\lambda} e^ {t_0}\le \frac{1}{4C}$. Hence we finally have,
\begin{align*}
    C_{t_0, L,\ell,\lambda}\le C\frac{2mL^2}{\delta\land\lambda} e^ {t_0}+ Ce^{-(\delta\land \lambda) t_0}\le \frac{1}{2}.
\end{align*}
For any $t\ge 0,$ we always have $[\frac{t}{t_0}]t_0\le t < [\frac{t}{t_0}]t_0+t_0,$ where $[x]$ denotes  the greatest integer  $\leq x.$ Hence,
\begin{align*}
    \norm{\theta_t-\theta_*}^2+ m\norm{ \mu_t- \pi^{\gamma, \varepsilon}(\cdot|{\theta_*})}^2_{\TV}&\le 2^{-[\frac{t}{t_0}]}\Big(\norm{\theta_{t- [\frac{t}{t_0}]t_0}-\theta_*}^2+m\norm{\mu_{t- [\frac{t}{t_0}]t_0}- \pi^{\gamma, \varepsilon}(\cdot|{\theta_*})}^2_{\TV}\Big)\\
    &\le 2^{-\frac{t}{t_0}+1}\sup_{0\le s\le t_0}\Big(\norm{\theta_s -\theta_*}^2+m\norm{\mu_s- \pi^{\gamma, \varepsilon}(\cdot|{\theta_*})}^2_{\TV}\Big)\\
    &\le  2^{-\frac{t}{t_0}+1}C\Big(\frac{e^{t_0}}{\delta\land\lambda} +1\Big)\Big(\norm{\theta_0 -\theta_*}^2+m\norm{\mu_0- \pi^{\gamma, \varepsilon}(\cdot|{\theta_*})}^2_{\TV}\Big),
\end{align*}
where the last inequality is from \eqref{ineq: long} and $C_{s, L,\ell,\lambda}$ could be bounded by $C(\frac{e^{t_0}}{\delta\land\lambda} +1)$ for $0\le s\le t_0$. And since $m\le \frac{1}{2L^2}<1,$ we conclude that
\begin{align*}
     \norm{\theta_t-\theta_*}^2+ \norm{ \mu_t- \pi^{\gamma, \varepsilon}(\cdot|{\theta_*})}^2_{\TV}&\le m^{-1}2^{-\frac{t}{t_0}+1}C\Big(\frac{e^{t_0}}{\delta\land\lambda} +1\Big)\Big(\norm{\theta_0 -\theta_*}^2+\norm{\mu_0- \pi^{\gamma, \varepsilon}(\cdot|{\theta_*})}^2_{\TV}\Big)\\
     &\le 2L^2 2^{-\frac{t}{t_0}+1}C\Big(\frac{e^{t_0}}{\delta\land\lambda} +1\Big)\Big(\norm{\theta_0 -\theta_*}^2+\norm{\mu_0- \pi^{\gamma, \varepsilon}(\cdot|{\theta_*})}^2_{\TV}\Big).
\end{align*}
Finally, we choose the constants $\eta=\eta(\delta,\lambda,C)= \log(2)t_0^{-1}=(\delta\land \lambda)\frac{\log(2)}{\log(4C)}$ and $\tilde C= \tilde C(L,C,\delta,\lambda)= 4CL^2\Big(\frac{e^{t_0}}{\delta\land\lambda} +1\Big)= 4CL^2\Big(\frac{(4C)^{(\delta\land\lambda)^{-1}}}{\delta\land\lambda} +1\Big) .$
\end{proofof}

\section{Algorithmic considerations} \label{Sec_Discre}
Throughout this work, we have considered Abram as a continuous-time dynamical system. To employ it for practical adversarially robust machine learning, this system needs to be discretised, i.e., we need to employ a time stepping scheme to obtain a sequence $(\theta^N_k, \xi_k^{1, N}, \ldots,\xi_k^{N,N})_{k=1}^\infty$ that approximates Abram at discrete points in time. We now propose two  discrete schemes for Abram, before then discussing the simulation of  Bayesian adversarial attacks.

\paragraph{Discrete Abram.} We initialise the particles by sampling them from the uniform distribution in the $\varepsilon$-ball. Then, we  employ a projected Euler-Maruyama scheme to discretise the particles $(\xi_t^{1, N}, \ldots,\xi_t^{N,N})_{t \geq 0}$. The Euler-Maruyama scheme (see, e.g., \citealt{higham2021introduction}) is a standard technique for first order diffusion equations -- we use a projected version to adhere to the reflecting boundary condition inside the ball $B$. Projected Euler-Maruyama schemes of this form have been studied in terms of almost sure convergence \citep{SLOMINSKI1994197} and, importantly, also in terms of their longtime behaviour \citep{lamperski21a}. The gradient flow  part $(\theta^N_t)_{t \geq 0}$ is discretised using a forward Euler method -- turning the gradient flow into a gradient descent algorithm \citep{nocedal1999numerical}. In applications, it is sometimes useful to allow multiple iterations of the particle dynamics $(\xi_t^{1, N}, \ldots,\xi_t^{N,N})_{t \geq 0}$ per iteration of the gradient flow $(\theta^N_t)_{t \geq 0}$. This corresponds to a linear time rescaling in the particle dynamics that should lead to a more accurate representation of the respective adversarial distribution.  

If the number of data sets $(y_k, z_k)_{k=1}^K$ is large, we may be required to use a data subsampling technique. Indeed,  we approximate $\frac{1}{K}\sum_{k=1}^K \Phi(y_k, z_k|\theta) \approx \Phi(y_{k'}, z_{k'})$ with a $k' \sim \mathrm{Unif}(\{1,\ldots,K\})$ being sampled independently in every iteration of the algorithm. This gives us  a stochastic gradient descent-type approximation of the gradients in the algorithm, see \cite{RobbinsMonro}. We note that we have not analysed data subsampling within Abram -- we expect that techniques from \cite{Jin1, Latz} may be useful to do so.
We summarise the method in Algorithm~\ref{alg:BAT1}.

\begin{algorithm}[hptb]\caption{Abram}
\begin{algorithmic}[1]
  \STATE initialise learning rate $h$, $\theta_0$, $\gamma$, $\varepsilon$
 
  \FOR{$j = 1, 2,\ldots, J$}
     \STATE pick a data point $(y_j, z_j)$ from training data
  \FOR{$i=1,2,\ldots ,N$}
  \STATE initialise $\xi_{0, j}^i= \xi_{T, j-1}^i$ if $j>1$ else $\xi_{0, j}^i \sim \text{Unif}[-\e, \e]$ 

  \FOR{$\tau = 1, 2, \ldots, T$ }
  \STATE

  $\xi_{\tau, j}^i \leftarrow \mathrm{Proj}_{\|\cdot \| \leq \varepsilon} (\xi_{\tau-1, j}^i + h\nabla_\xi\Phi(y_j + \xi_{\tau-1, j}^i, z_j| \theta_{j-1}) + \gamma^{-1} \sqrt{2h}w_{\tau, j}^i) \quad  (w_{\tau, j}^i \sim \mathrm{N}(0,\mathrm{Id}) \text{ iid.})$

  \ENDFOR
  \ENDFOR
  \STATE ${\mu}_j^N \leftarrow \frac{1}{N} \sum_{i=1}^N\delta(\cdot - \xi_{T, j}^i)$
  \STATE $\hat{C}_j \leftarrow \mathrm{Cov}_{{\mu}_j^N}(\Phi(y_j + \cdot, z_j| \theta_{j-1}),\nabla_\theta\Phi(y_j + \cdot, z_j | \theta_{j-1}))$
  \STATE
 $\theta_j \leftarrow  \theta_{j-1} - \frac{h}{N}\sum_{i=1}^N\nabla_\theta\Phi(y_j + \xi_{T, j}^i, z_j|\theta_{j-1}) - \gamma h \hat{C}_j$ 
  \ENDFOR
  \RETURN $\theta_J$
\end{algorithmic}
\label{alg:BAT1}
\end{algorithm}
\paragraph{Discrete Abram with mini-batching.}
When subsampling in machine learning practice, it is usually advisable to choose mini-batches of data points rather than single data points. Here, we pick a mini-batch $\{y_{k'}, z_{k'}\}_{k' \in K'} \subseteq \{y_k, z_k\}_{k=1}^K$, with $\#K ' \ll K$ and perform the gradient step with all elements with index in $K'$ rather than a single element in the whole data set $\{y_k, z_k\}_{k=1}^K$. Abram would then require a set of $N$ particles for each of the elements in the batch, i.e., $NK'$ particles in total. In practice, $N$  and $K'$ are both likely to  be large, leading to Abram becoming computationally infeasible. Based on an idea discussed in a different context in \cite{Hanu}, we propose the following method: in every time step $j = 1,\ldots, J$ we choose an identical number of particles $(\xi^{i}_{T,j})_{i=1}^N$ and data sets $(y^{i}_j,z^{i}_j)_{i=1}^N$ in the mini-batch, i.e. $\# K' = N$. Then, we employ the Abram dynamics, but equip each particle $\xi^{i}_{T,j}$ with a different data point $(y^{i}_j,z^{i}_j)$ $(i = 1,\ldots,N)$. As opposed to Abram with separate particles per data point, we here compute the sampling covariance throughout all subsampled data points rather than separately for every data point. The resulting dynamics are then only close to \eqref{main}, if we assume that the adversarial attacks for each data point are not too dissimilar of each other. However, the dynamics may also be successful, if this is not the case. We summarise the resulting method in Algorithm~\ref{alg:BAT}.

\begin{algorithm}[hptb]\caption{Mini-batching Abram}
\begin{algorithmic}[1]
  \STATE initialize learning rate $h$, $\theta_0$, $\gamma$, $\varepsilon$
  \FOR{$j = 1, 2,\ldots, J$}
  \FOR{$i=1,2,\ldots ,N$}
  \STATE initialize $\xi_{0, j}^i \leftarrow \xi_{T, j-1}^i$ if $j>1$ else $\xi_{0, j}^i \sim \text{Unif}[-\e, \e]$ iid. 
  \STATE pick $N$ data points $(y_j^i, z_j^i)_{i=1}^N$ from the training data $(y_k, z_k)_{k=1}^K$
  \FOR{$\tau = 1, 2, \ldots, T$ }
  \STATE $
  \xi_{\tau, j}^i \leftarrow \mathrm{Proj}_{\|\cdot \| \leq \varepsilon} (\xi_{\tau-1, j}^i + h\nabla_\xi\Phi(y_j^i + \xi_{\tau-1, j}^i, z_j^i | \theta_{j-1}) + \gamma^{-1} \sqrt{2h}w_{\tau, j}^i) \qquad  (w_{\tau, j}^i \sim \mathrm{N}(0,\mathrm{Id}) \text{ iid.})$
  \ENDFOR
  \ENDFOR
   \STATE ${\mu}_j^N \leftarrow \frac{1}{N} \sum_{i=1}^N\delta(\cdot - (y_j^i + \xi_{T, j}^i))$
  \STATE $\hat{C}_j \leftarrow \mathrm{Cov}_{{\mu}_j^N}(\Phi(\cdot, z_j| \theta_{j-1}),\nabla_\theta\Phi(\cdot, z_j | \theta_{j-1}))$
  \STATE
 $\theta_j \leftarrow  \theta_{j-1} - \frac{h}{N}\sum_{i=1}^N\nabla_\theta\Phi(y_j^i + \xi_{T, j}^i, z_j|\theta_{j-1}) - \gamma h \hat{C}_j$ 
  \ENDFOR
  \RETURN $\theta_J$
\end{algorithmic}
\label{alg:BAT}
\end{algorithm}

\paragraph{Bayesian attacks.}
The mechanism used to approximate the Bayesian adversary in Algorithm \ref{alg:BAT1} can naturally be used as a Bayesian attack. We propose two different attacks:
\begin{enumerate}
    \item We use the projected Euler-Maruyama method to sample from the Bayesian adversarial distribution $\pi^{\gamma, \varepsilon}$ corresponding to an input data set $y \in Y$ and model parameter $\theta^*$. We summarise this attack in Algorithm \ref{alg:BA}.
    \item Instead of attacking with a sample from $\pi^{\gamma, \varepsilon}$, we can attack with the mean of said distribution.  From Proposition~\ref{prop: FY2}, we know that the particle system $(\hat{\xi}_t)_{t \geq 0}$ that is based on a fixed parameter $\theta_*$, is exponentially ergodic. Thus, we approximate the mean of $\pi^{\gamma, \varepsilon}$, by sampling $(\hat{\xi}_t)_{t \geq 0}$ using projected Euler-Maruyama and approximate the mean by computing the sample mean throughout the sampling path.  We summarise this method in Algorithm \ref{alg:BA attack mean}. 
\end{enumerate}

\begin{algorithm}[hptb]\caption{Bayesian sample attack}
\begin{algorithmic}[1]
  \REQUIRE unperturbed input data set $y$
  \STATE initialise $h$, $\gamma$, $\varepsilon$, $\xi_0 \sim \text{Unif}[-\e, \e]$
  \FOR{$j = 1, 2, \ldots, J$}
  \STATE
 $\xi_j \leftarrow \mathrm{Proj}_{\|\cdot \| \leq \varepsilon} (\xi_{j-1} + h\nabla_\xi\Phi(x + \xi_{j-1}, \theta) + \gamma^{-1} \sqrt{2h}w_j) \qquad  (w_j \sim \mathrm{N}(0,\mathrm{Id}))$
    \ENDFOR
  \RETURN adversarially perturbed input data point $y + \xi_J$
\end{algorithmic}
\label{alg:BA}
\end{algorithm}

\begin{algorithm}[hptb]\caption{Bayesian mean attack}
\begin{algorithmic}[1]
   \REQUIRE unperturbed input data point $y$
  \STATE initialise $h$, $\gamma$, $\varepsilon$, $\xi_0 \sim \text{Unif}[-\e, \e]$
  \FOR{$j = 1, 2, \ldots, J$}
  \STATE
 $\xi_j \leftarrow \mathrm{Proj}_{\|\cdot \| \leq \varepsilon} (\xi_{j-1} + h\nabla_\xi\Phi(x + \xi_{j-1}, \theta) + \gamma^{-1} \sqrt{2h}w_j) \qquad  (w_j \sim \mathrm{N}(0,\mathrm{Id}))$
    \ENDFOR
  \RETURN adversarially perturbed input data point $y +\frac{1}{J}\sum_{j=1}^J \xi_j$
\end{algorithmic}
\label{alg:BA attack mean}
\end{algorithm}

\section{Deep learning experiments} \label{Sec_Exp}

We now study the application of the Bayesian Adversary in deep learning.  The model parameter $\theta$ is updated for $J$ steps with batch size/number of particles $N$. For each particle in the ensemble, the perturbation parameter $\xi$ is updated for $T$ steps.

\subsection{MNIST}
We test Algorithm \ref{alg:BAT1} and Algorithm \ref{alg:BAT} on the classification benchmark data set MNIST \citep{LeCun2005TheMD} against different adversarial attacks and compare the results with the results after an FGSM-based \citep{Wong2020Fast} adversarial training. Each experimental run is conducted on a single Nvidia A6000 GPU. We utilize the Adversarial Robustness Toolbox (ART) for the experiments, see \cite{art2018} for more details. ART is a Python library for adversarial robustness that provides various APIs for defence and attack.  
We use a neural network with two convolution layers each followed by a max pooling. In Algorithm \ref{alg:BAT1}, we set $\gamma=1, h=\varepsilon, \varepsilon=0.2$.
In Algorithm \ref{alg:BAT}, we set $\gamma=1, h=10\varepsilon, \varepsilon=0.2$. We observe that setting larger noise scale for the attack during training helps Abram's final evaluation performance. We train the neural network for 30 epochs (i.e., 30 full iterations through the data set) for each method. The number of particles (and batch size) is $N=128$ and the inner loop is trained for $T=10$ times. To better understand how Abram responds to different attacks, we test against six attack methods: PGD \citep{madry2018towards},  Auto-PGD \citep{pmlr-v119-croce20b}, Carlini and Wagner \citep{7958570}, Wasserstein Attack \citep{pmlr-v97-wong19a}, as well as the Bayesian attacks introduced in this paper -- see Algorithms \ref{alg:BA} and \ref{alg:BA attack mean}. We also test the method's accuracy in the case of  benign (non-attacked) input data.  For the Bayesian sample attack and Bayesian mean attack, we set $\gamma=1000$. See Table~\ref{Table:deep_learning_test_acc_mnist} for the comparison. The results are averaged over three random seeds. 
We observe that Abram performs similarly to FGSM under  Wasserstein, Bayesian sample, and Bayesian mean attack. FGSM outperforms Abram under Auto-PGD, PGD, and Carlini \& Wagner attack. We conclude that Abram is as effective as FGSM under certain weaker attacks, but can usually not outperform the conventional FGSM. 

\begin{table}[h!]
    \centering 
	\begin{tabular}{rlccllc}
		\toprule
		&\textbf{Adversarial Attack ($\varepsilon=0.1$)} & \textbf{Abram} &  \textbf{Mini-batching Abram}  & \quad\quad\textbf{FGSM} &\\ \midrule
		&Benign Test &   92.41$\pm$0.05&  99.28$\pm$0.04   & \quad\quad 99.44$\pm$0.05  &     \\
            \hdashline
		&Auto-PGD 
 & 78.18$\pm$0.20 &    95.86$\pm$0.18      & \quad\quad 98.84$\pm$0.05  &   \\ 
            &PGD 
       & 78.24$\pm$0.17   &     95.86$\pm$0.17      & \quad\quad 98.85$\pm$0.04   &        \\ 
		&Wasserstein Attack 
  & 86.27$\pm$0.12 &  96.51$\pm$0.13     &\quad\quad  96.97$\pm$0.04    &      \\ 
		
		&Carlini \& Wagner Attack  
  & 8.76$\pm$0.015 &  5.14$\pm$0.1   &  \quad\quad 62.60$\pm$0.02& \\ 
		&Bayesian sample attack   & 92.43$\pm$0.10    & 99.29$\pm$0.03       &  \quad\quad 99.44$\pm$0.06    &    \\ 
		&Bayesian mean attack  &  92.42$\pm$0.08   & 99.28$\pm$0.04 &   \quad\quad 99.44$\pm$0.05  & \\
		\bottomrule
	\end{tabular}
	  \caption{\small Comparison of test accuracy (\%) on MNIST with different adversarial attack after  Abram, mini-batching Abram, and FGSM \citep{Wong2020Fast} adversarial training.}
	\label{Table:deep_learning_test_acc_mnist}
\end{table}
Another observation is that mini-batching Abram outperforms Abram significantly. Recall that in Abram we used $128$ particles for each data point which can be viewed as SGD with batch size $1$, whereas the mini-batching Abram is similar to the mini-batching SGD. Mini-batching Abram has the freedom to set the batch size which helps to reduce the  variance in the stochastic optimisation and, thus, gives more stable results. In particular, with mini-batching Abram, gradients are approximated by multiple data points instead of one data point which is the case in Abram. Having a larger batch size also increases computation efficiency by doing matrix multiplication on GPUs, which is important in modern machine learning applications as the datasets can be expected to be large.

\subsection{CIFAR10}

Similarly, we test Algorithm \ref{alg:BAT} on the classification benchmark dataset CIFAR10 \citep{Krizhevsky2009LearningML} by utilising ART. The dataset is pre-processed by random crop and random horizontal flip following \cite{Krizhevsky2009LearningML} for data augmentation. 
The neural network uses the Pre-act ResNet-18 \citep{10.1007/978-3-319-46493-0_38} architecture. For Abram, we set $\gamma=1, h=\varepsilon, \varepsilon=16/255$. Similar as in MNIST experiments, practically we find that setting larger noise scale for attack in training Abram helps to obtain a better final evaluation performance. The batch size $N=128$ and the inner loop is simulated for $T=10$ times. We train both mini-batching Abram and FGSM for 30 epochs. Due to its worse performance for MNIST and the large size of CIFAR10, we have not used the non-mini-batching version of Abram in this second problem. For the Bayesian sample attack and the Bayesian mean attack, we set $\gamma=0.001$. We present the results in Table~\ref{Table:deep_learning_test_acc_cifar10}. There, we observe that mini-batching Abram outperforms FGSM under Wasserstein and the Bayesian attacks, but not in any of the other cases.

\begin{table}[h!]
    \centering 
	\begin{tabular}{rlcllc}
		\toprule
		&\textbf{Adversarial Attack ($\varepsilon=8/255$)} &  \textbf{Mini-batching Abram}  & \quad\quad\textbf{FGSM} &\\ \midrule
		&Benign Test &   65.35$\pm$0.05   & \quad\quad 55.61$\pm$0.03  &     \\
            \hdashline
		&Auto-PGD 
  &    11.15$\pm$0.12      & \quad\quad 43.70$\pm$0.06  &   \\ 
            &PGD 
            &     11.22$\pm$0.09      & \quad\quad 43.65$\pm$0.04   &        \\ 
		&Wasserstein Attack 
  &    58.04$\pm$0.15     &\quad\quad  55.30$\pm$0.03    &      \\ 
		
		&Carlini \& Wagner Attack 
  &    19.01$\pm$0.12   &  \quad\quad 62.60$\pm$0.02& \\ 
		&Bayesian sample attack   &     62.52$\pm$0.03       &  \quad\quad 55.83$\pm$0.05    &    \\ 
		&Bayesian mean attack &     63.72$\pm$0.06 &   \quad\quad 55.81$\pm$0.05  & \\
		\bottomrule
	\end{tabular}
	    \caption{\small Comparison of test accuracy (\%) on CIFAR10 with different adversarial attack after mini-batching Abram and FGSM \citep{Wong2020Fast} adversarial training.}
	\label{Table:deep_learning_test_acc_cifar10}
\end{table}

\section{Conclusions} \label{Sec_concl}
We have introduced the Bayesian adversarial robustness problem. This problem can be interpreted as either a relaxation of the usual minmax problem in adversarial learning or as learning methodology that is able to counter Bayesian  adversarial attacks. To solve the Bayesian adversarial robustness problem, we introduce Abram -- the Adversarially Bayesian Particle Sampler. We prove that Abram approximates a McKean-Vlasov SDE and that this McKean-Vlasov SDE is able to find the minimiser of certain (simple) Bayesian adversarial robustness problems. Thus, at least for a certain class of problems, we give a mathematical justification for the use of Abram. We propose two ways to discretise Abram: a direct Euler-Maruyama discretisation of the Abram dynamics and an alternative method that is more suitable when training with respect to large data sets. We apply Abram in two deep learning problems. There we see that Abram can effectively prevent certain adversarial attacks (especially Bayesian attacks), but is overall not as strong as classical optimisation-based heuristics.

\bibliography{ref}{}

\begin{thebibliography}{61}
\providecommand{\natexlab}[1]{#1}
\providecommand{\url}[1]{\texttt{#1}}
\expandafter\ifx\csname urlstyle\endcsname\relax
  \providecommand{\doi}[1]{doi: #1}\else
  \providecommand{\doi}{doi: \begingroup \urlstyle{rm}\Url}\fi

\bibitem[Adams et~al.(2022)Adams, {dos Reis}, Ravaille, Salkeld, and Tugaut]{Adams}
D.~Adams, G.~{dos Reis}, R.~Ravaille, W.~Salkeld, and J.~Tugaut.
\newblock Large deviations and exit-times for reflected {M}c{K}ean–{V}lasov equations with self-stabilising terms and superlinear drifts.
\newblock \emph{Stochastic Processes and their Applications}, 146:\penalty0 264--310, 2022.

\bibitem[Akyildiz et~al.(2023)Akyildiz, Crucinio, Girolami, Johnston, and Sabanis]{Akyildiz}
{\"O}.~D. Akyildiz, F.~R. Crucinio, M.~Girolami, T.~Johnston, and S.~Sabanis.
\newblock Interacting particle {L}angevin algorithm for maximum marginal likelihood estimation, 2023.
\newblock URL \url{https://arxiv.org/abs/2303.13429}.

\bibitem[Bachute and Subhedar(2021)]{Bachute}
M.~R. Bachute and J.~M. Subhedar.
\newblock Autonomous driving architectures: Insights of machine learning and deep learning algorithms.
\newblock \emph{Machine Learning with Applications}, 6:\penalty0 100164, 2021.
\newblock ISSN 2666-8270.
\newblock \doi{https://doi.org/10.1016/j.mlwa.2021.100164}.
\newblock URL \url{https://www.sciencedirect.com/science/article/pii/S2666827021000827}.

\bibitem[Carlini and Wagner(2017{\natexlab{a}})]{10.1145/3128572.3140444}
N.~Carlini and D.~Wagner.
\newblock Adversarial examples are not easily detected: Bypassing ten detection methods.
\newblock In \emph{Proceedings of the 10th ACM Workshop on Artificial Intelligence and Security}, AISec '17, page 3–14, New York, NY, USA, 2017{\natexlab{a}}. Association for Computing Machinery.
\newblock ISBN 9781450352024.

\bibitem[Carlini and Wagner(2017{\natexlab{b}})]{7958570}
N.~Carlini and D.~Wagner.
\newblock Towards evaluating the robustness of neural networks.
\newblock In \emph{2017 IEEE Symposium on Security and Privacy (SP)}, pages 39--57, Los Alamitos, CA, USA, 2017{\natexlab{b}}. IEEE Computer Society.

\bibitem[Choromanska et~al.(2015)Choromanska, Henaff, Mathieu, Ben~Arous, and LeCun]{pmlr-v38-choromanska15}
A.~Choromanska, M.~Henaff, M.~Mathieu, G.~Ben~Arous, and Y.~LeCun.
\newblock The loss surfaces of multilayer networks.
\newblock In \emph{Proceedings of the Eighteenth International Conference on Artificial Intelligence and Statistics}, volume~38 of \emph{Proceedings of Machine Learning Research}, pages 192--204, San Diego, California, USA, 09--12 May 2015. PMLR.

\bibitem[Cipriani et~al.(2024)Cipriani, Scagliotti, and Wöhrer]{cipriani2024minimaxoptimalcontrolapproach}
C.~Cipriani, A.~Scagliotti, and T.~Wöhrer.
\newblock A minimax optimal control approach for robust neural odes, 2024.
\newblock URL \url{https://arxiv.org/abs/2310.17584}.

\bibitem[Croce and Hein(2020)]{pmlr-v119-croce20b}
F.~Croce and M.~Hein.
\newblock Reliable evaluation of adversarial robustness with an ensemble of diverse parameter-free attacks.
\newblock In \emph{Proceedings of the 37th International Conference on Machine Learning}, volume 119 of \emph{Proceedings of Machine Learning Research}, pages 2206--2216. PMLR, 13--18 Jul 2020.

\bibitem[Dong et~al.(2018)Dong, Liao, Pang, Su, Zhu, Hu, and Li]{8579055}
Y.~Dong, F.~Liao, T.~Pang, H.~Su, J.~Zhu, X.~Hu, and J.~Li.
\newblock Boosting adversarial attacks with momentum.
\newblock In \emph{2018 IEEE/CVF Conference on Computer Vision and Pattern Recognition (CVPR)}, pages 9185--9193, Los Alamitos, CA, USA, 2018. IEEE Computer Society.

\bibitem[Fournier and Guillin(2015)]{Fournier1}
N.~Fournier and A.~Guillin.
\newblock On the rate of convergence in {W}asserstein distance of the empirical measure.
\newblock \emph{Probab. Theory Relat. Fields}, 162:\penalty0 707--738, 2015.

\bibitem[Fu et~al.(2021)Fu, He, Xu, and Tao]{FuHeXu}
S.~Fu, F.~He, Y.~Xu, and D.~Tao.
\newblock Bayesian inference forgetting.
\newblock \emph{CoRR}, abs/2101.06417, 2021.
\newblock URL \url{https://arxiv.org/abs/2101.06417}.

\bibitem[Ghiasi et~al.(2020)Ghiasi, Shafahi, and Goldstein]{Ghiasi2020BREAKING}
A.~Ghiasi, A.~Shafahi, and T.~Goldstein.
\newblock Breaking certified defenses: Semantic adversarial examples with spoofed robustness certificates.
\newblock In \emph{International Conference on Learning Representations}, 2020.

\bibitem[Goodfellow et~al.(2015)Goodfellow, Shlens, and Szegedy]{goodfellow2015explaining}
I.~J. Goodfellow, J.~Shlens, and C.~Szegedy.
\newblock Explaining and harnessing adversarial examples, 2015.
\newblock URL \url{https://arxiv.org/abs/1412.6572}.

\bibitem[Gowal et~al.(2019)Gowal, Dvijotham, Stanforth, Bunel, Qin, Uesato, Arandjelovic, Mann, and Kohli]{9010971}
S.~Gowal, K.~Dvijotham, R.~Stanforth, R.~Bunel, C.~Qin, J.~Uesato, R.~Arandjelovic, T.~A. Mann, and P.~Kohli.
\newblock Scalable verified training for provably robust image classification.
\newblock In \emph{2019 IEEE/CVF International Conference on Computer Vision (ICCV)}, pages 4841--4850, 2019.

\bibitem[Guo et~al.(2018)Guo, Rana, Cisse, and van~der Maaten]{guo2018countering}
C.~Guo, M.~Rana, M.~Cisse, and L.~van~der Maaten.
\newblock Countering adversarial images using input transformations.
\newblock In \emph{International Conference on Learning Representations}, 2018.

\bibitem[Hanu et~al.(2023)Hanu, Latz, and Schillings]{Hanu}
M.~Hanu, J.~Latz, and C.~Schillings.
\newblock Subsampling in ensemble kalman inversion.
\newblock \emph{Inverse Problems}, 39\penalty0 (9):\penalty0 094002, 2023.
\newblock \doi{10.1088/1361-6420/ace64b}.
\newblock URL \url{https://dx.doi.org/10.1088/1361-6420/ace64b}.

\bibitem[He et~al.(2016)He, Zhang, Ren, and Sun]{10.1007/978-3-319-46493-0_38}
K.~He, X.~Zhang, S.~Ren, and J.~Sun.
\newblock Identity mappings in deep residual networks.
\newblock In \emph{Computer Vision -- ECCV 2016}, pages 630--645. Springer International Publishing, 2016.

\bibitem[Higham and Kloeden(2021)]{higham2021introduction}
D.~Higham and P.~Kloeden.
\newblock \emph{An introduction to the numerical simulation of stochastic differential equations}.
\newblock SIAM, 2021.

\bibitem[Hwang(1980)]{Hwang}
C.-R. Hwang.
\newblock Laplace's method revisited: Weak convergence of probability measures.
\newblock \emph{The Annals of Probability}, 8\penalty0 (6):\penalty0 1177--1182, 1980.
\newblock ISSN 00911798, 2168894X.
\newblock URL \url{http://www.jstor.org/stable/2243019}.

\bibitem[Jia et~al.(2022)Jia, Qu, and Gong]{jia2022multiguard}
J.~Jia, W.~Qu, and N.~Z. Gong.
\newblock Multiguard: Provably robust multi-label classification against adversarial examples.
\newblock In \emph{Advances in Neural Information Processing Systems}, 2022.

\bibitem[Jin et~al.(2023)Jin, Latz, Liu, and Sch\"onlieb]{Jin1}
K.~Jin, J.~Latz, C.~Liu, and C.-B. Sch\"onlieb.
\newblock A continuous-time stochastic gradient descent method for continuous data.
\newblock \emph{Journal of Machine Learning Research}, 24\penalty0 (274):\penalty0 1--48, 2023.

\bibitem[Krizhevsky(2009)]{Krizhevsky2009LearningML}
A.~Krizhevsky.
\newblock Learning multiple layers of features from tiny images.
\newblock In \emph{https://www.cs.toronto.edu/~kriz/learning-features-2009-TR.pdf}, 2009.

\bibitem[Kuntz et~al.(2023)Kuntz, Lim, and Johansen]{Kuntz}
J.~Kuntz, J.~N. Lim, and A.~M. Johansen.
\newblock Particle algorithms for maximum likelihood training of latent variable models.
\newblock In \emph{Proceedings of The 26th International Conference on Artificial Intelligence and Statistics}, volume 206 of \emph{Proceedings of Machine Learning Research}, pages 5134--5180. PMLR, 25--27 Apr 2023.

\bibitem[Kurakin et~al.(2017{\natexlab{a}})Kurakin, Goodfellow, and Bengio]{kurakin2017adversarial}
A.~Kurakin, I.~Goodfellow, and S.~Bengio.
\newblock Adversarial examples in the physical world, 2017{\natexlab{a}}.

\bibitem[Kurakin et~al.(2017{\natexlab{b}})Kurakin, Goodfellow, and Bengio]{kurakin2017adversarial2}
A.~Kurakin, I.~J. Goodfellow, and S.~Bengio.
\newblock Adversarial machine learning at scale.
\newblock In \emph{International Conference on Learning Representations}, 2017{\natexlab{b}}.
\newblock URL \url{https://openreview.net/forum?id=BJm4T4Kgx}.

\bibitem[Lamperski(2021)]{lamperski21a}
A.~Lamperski.
\newblock Projected stochastic gradient {L}angevin algorithms for constrained sampling and non-convex learning.
\newblock In M.~Belkin and S.~Kpotufe, editors, \emph{Proceedings of Thirty Fourth Conference on Learning Theory}, volume 134 of \emph{Proceedings of Machine Learning Research}, pages 2891--2937. PMLR, 15--19 Aug 2021.
\newblock URL \url{https://proceedings.mlr.press/v134/lamperski21a.html}.

\bibitem[Latz(2021)]{Latz}
J.~Latz.
\newblock Analysis of stochastic gradient descent in continuous time.
\newblock \emph{Statistics and Computing}, 31:\penalty0 39, 2021.
\newblock \doi{https://doi.org/10.1007/s11222-021-10016-8}.

\bibitem[{Le Gall}(2013)]{legall}
J.-F. {Le Gall}.
\newblock \emph{Mouvement {B}rownien, martingales et calcul stochastique}.
\newblock Springer, 2013.

\bibitem[LeCun and Cortes(2005)]{LeCun2005TheMD}
Y.~LeCun and C.~Cortes.
\newblock The {MNIST} database of handwritten digits.
\newblock In \emph{http://yann.lecun.com/exdb/mnist}, 2005.

\bibitem[Liu et~al.(2022)Liu, Levine, Lau, Chellappa, and Feizi]{9878604}
J.~Liu, A.~Levine, C.~Lau, R.~Chellappa, and S.~Feizi.
\newblock Segment and complete: Defending object detectors against adversarial patch attacks with robust patch detection.
\newblock In \emph{2022 IEEE/CVF Conference on Computer Vision and Pattern Recognition (CVPR)}, pages 14953--14962, Los Alamitos, CA, USA, 2022. IEEE Computer Society.

\bibitem[Madry et~al.(2018)Madry, Makelov, Schmidt, Tsipras, and Vladu]{madry2018towards}
A.~Madry, A.~Makelov, L.~Schmidt, D.~Tsipras, and A.~Vladu.
\newblock Towards deep learning models resistant to adversarial attacks.
\newblock In \emph{International Conference on Learning Representations}, 2018.

\bibitem[Maini et~al.(2020)Maini, Wong, and Kolter]{pmlr-v119-maini20a}
P.~Maini, E.~Wong, and Z.~Kolter.
\newblock Adversarial robustness against the union of multiple perturbation models.
\newblock In H.~D. III and A.~Singh, editors, \emph{Proceedings of the 37th International Conference on Machine Learning}, volume 119 of \emph{Proceedings of Machine Learning Research}, pages 6640--6650. PMLR, 13--18 Jul 2020.

\bibitem[McKean(1966)]{McKean}
H.~P. McKean.
\newblock A class of {M}arkov processes associated with nonlinear parabolic equations.
\newblock \emph{Proceedings of the National Academy of Sciences}, 56\penalty0 (6):\penalty0 1907--1911, 1966.
\newblock \doi{10.1073/pnas.56.6.1907}.
\newblock URL \url{https://www.pnas.org/doi/abs/10.1073/pnas.56.6.1907}.

\bibitem[Metzen et~al.(2017)Metzen, Genewein, Fischer, and Bischoff]{metzen2017on}
J.~H. Metzen, T.~Genewein, V.~Fischer, and B.~Bischoff.
\newblock On detecting adversarial perturbations.
\newblock In \emph{International Conference on Learning Representations}, 2017.

\bibitem[Mosbach et~al.(2019)Mosbach, Andriushchenko, Trost, Hein, and Klakow]{mosbach2019logit}
M.~Mosbach, M.~Andriushchenko, T.~Trost, M.~Hein, and D.~Klakow.
\newblock Logit pairing methods can fool gradient-based attacks, 2019.
\newblock URL \url{https://arxiv.org/abs/1810.12042}.

\bibitem[Nicolae et~al.(2018)Nicolae, Sinn, Tran, Buesser, Rawat, Wistuba, Zantedeschi, Baracaldo, Chen, Ludwig, Molloy, and Edwards]{art2018}
M.-I. Nicolae, M.~Sinn, M.~N. Tran, B.~Buesser, A.~Rawat, M.~Wistuba, V.~Zantedeschi, N.~Baracaldo, B.~Chen, H.~Ludwig, I.~Molloy, and B.~Edwards.
\newblock Adversarial robustness toolbox v1.2.0.
\newblock \emph{CoRR}, 1807.01069, 2018.

\bibitem[Nie et~al.(2022)Nie, Guo, Huang, Xiao, Vahdat, and Anandkumar]{pmlr-v162-nie22a}
W.~Nie, B.~Guo, Y.~Huang, C.~Xiao, A.~Vahdat, and A.~Anandkumar.
\newblock Diffusion models for adversarial purification.
\newblock In \emph{Proceedings of the 39th International Conference on Machine Learning}, volume 162 of \emph{Proceedings of Machine Learning Research}, pages 16805--16827. PMLR, 17--23 Jul 2022.

\bibitem[Nocedal and Wright(1999)]{nocedal1999numerical}
J.~Nocedal and S.~J. Wright.
\newblock \emph{Numerical optimization}.
\newblock Springer, 1999.

\bibitem[Pilipenko(2014)]{Pilipenko}
A.~Pilipenko.
\newblock \emph{An introduction to stochastic differential equations with reflection}.
\newblock Lectures in Pure and Applied Mathematics. Universit\"at Potsdam, 2014.

\bibitem[Rajkomar et~al.(2019)Rajkomar, Dean, and Kohane]{Rajkomar}
A.~Rajkomar, J.~Dean, and I.~Kohane.
\newblock Machine learning in medicine.
\newblock \emph{New England Journal of Medicine}, 380\penalty0 (14):\penalty0 1347--1358, 2019.
\newblock \doi{10.1056/NEJMra1814259}.
\newblock URL \url{https://www.nejm.org/doi/full/10.1056/NEJMra1814259}.

\bibitem[Robbins and Monro(1951)]{RobbinsMonro}
H.~Robbins and S.~Monro.
\newblock A stochastic approximation method.
\newblock \emph{The Annals of Mathematical Statistics}, 22\penalty0 (3):\penalty0 400--407, 1951.

\bibitem[Sato et~al.(2024)Sato, Takeda, Kawai, and Suzuki]{Sato}
K.~Sato, A.~Takeda, R.~Kawai, and T.~Suzuki.
\newblock Convergence error analysis of reflected gradient {L}angevin dynamics for non-convex constrained optimization, 2024.
\newblock URL \url{https://arxiv.org/abs/2203.10215}.

\bibitem[Sharma et~al.(2020)Sharma, Bhatt, and Sharma]{Sharma}
S.~Sharma, M.~Bhatt, and P.~Sharma.
\newblock Face recognition system using machine learning algorithm.
\newblock In \emph{2020 5th International Conference on Communication and Electronics Systems (ICCES)}, pages 1162--1168, 2020.
\newblock \doi{10.1109/ICCES48766.2020.9137850}.

\bibitem[Sheikholeslami et~al.(2021)Sheikholeslami, Lotfi, and Kolter]{sheikholeslami2021provably}
F.~Sheikholeslami, A.~Lotfi, and J.~Z. Kolter.
\newblock Provably robust classification of adversarial examples with detection.
\newblock In \emph{International Conference on Learning Representations}, 2021.

\bibitem[Song et~al.(2018)Song, Kim, Nowozin, Ermon, and Kushman]{song2018pixeldefend}
Y.~Song, T.~Kim, S.~Nowozin, S.~Ermon, and N.~Kushman.
\newblock Pixeldefend: Leveraging generative models to understand and defend against adversarial examples.
\newblock In \emph{International Conference on Learning Representations}, 2018.

\bibitem[Szegedy et~al.(2014)Szegedy, Zaremba, Sutskever, Bruna, Erhan, Goodfellow, and Fergus]{szegedy2014intriguing}
C.~Szegedy, W.~Zaremba, I.~Sutskever, J.~Bruna, D.~Erhan, I.~Goodfellow, and R.~Fergus.
\newblock Intriguing properties of neural networks, 2014.
\newblock URL \url{https://arxiv.org/abs/1312.6199}.

\bibitem[Słomiński(1994)]{SLOMINSKI1994197}
L.~Słomiński.
\newblock On approximation of solutions of multidimensional {SDE}'s with reflecting boundary conditions.
\newblock \emph{Stochastic Processes and their Applications}, 50\penalty0 (2):\penalty0 197--219, 1994.
\newblock ISSN 0304-4149.
\newblock \doi{https://doi.org/10.1016/0304-4149(94)90118-X}.
\newblock URL \url{https://www.sciencedirect.com/science/article/pii/030441499490118X}.

\bibitem[Tramer and Boneh(2019)]{NEURIPS2019_5d4ae76f}
F.~Tramer and D.~Boneh.
\newblock Adversarial training and robustness for multiple perturbations.
\newblock In \emph{Advances in Neural Information Processing Systems}, volume~32, 2019.

\bibitem[Tramèr et~al.(2018)Tramèr, Kurakin, Papernot, Goodfellow, Boneh, and McDaniel]{tra2018ensemble}
F.~Tramèr, A.~Kurakin, N.~Papernot, I.~Goodfellow, D.~Boneh, and P.~McDaniel.
\newblock Ensemble adversarial training: Attacks and defenses.
\newblock In \emph{International Conference on Learning Representations}, 2018.

\bibitem[Villani(2009)]{Villani}
C.~Villani.
\newblock \emph{Optimal transport: Old and new}.
\newblock Springer, 2009.

\bibitem[Wang(2023)]{FY2}
F.-Y. Wang.
\newblock Exponential ergodicity for singular reflecting {M}c{K}ean–{V}lasov {SDE}s.
\newblock \emph{Stochastic Processes and their Applications}, 160:\penalty0 265--293, 2023.

\bibitem[Wang et~al.(2023)Wang, Pang, Du, Lin, Liu, and Yan]{pmlr-v202-wang23ad}
Z.~Wang, T.~Pang, C.~Du, M.~Lin, W.~Liu, and S.~Yan.
\newblock Better diffusion models further improve adversarial training.
\newblock In \emph{Proceedings of the 40th International Conference on Machine Learning}, volume 202 of \emph{Proceedings of Machine Learning Research}, pages 36246--36263. PMLR, 23--29 Jul 2023.

\bibitem[Wong and Kolter(2018)]{pmlr-v80-wong18a}
E.~Wong and Z.~Kolter.
\newblock Provable defenses against adversarial examples via the convex outer adversarial polytope.
\newblock In \emph{Proceedings of the 35th International Conference on Machine Learning}, volume~80 of \emph{Proceedings of Machine Learning Research}, pages 5286--5295. PMLR, 10--15 Jul 2018.

\bibitem[Wong et~al.(2019)Wong, Schmidt, and Kolter]{pmlr-v97-wong19a}
E.~Wong, F.~Schmidt, and Z.~Kolter.
\newblock {W}asserstein adversarial examples via projected {S}inkhorn iterations.
\newblock In \emph{Proceedings of the 36th International Conference on Machine Learning}, volume~97 of \emph{Proceedings of Machine Learning Research}, pages 6808--6817. PMLR, 09--15 Jun 2019.

\bibitem[Wong et~al.(2020)Wong, Rice, and Kolter]{Wong2020Fast}
E.~Wong, L.~Rice, and J.~Z. Kolter.
\newblock Fast is better than free: Revisiting adversarial training.
\newblock In \emph{International Conference on Learning Representations}, 2020.

\bibitem[Xu et~al.(2023)Xu, Xiao, Zheng, Cai, and Nevatia]{10030535}
K.~Xu, Y.~Xiao, Z.~Zheng, K.~Cai, and R.~Nevatia.
\newblock Patchzero: Defending against adversarial patch attacks by detecting and zeroing the patch.
\newblock In \emph{2023 IEEE/CVF Winter Conference on Applications of Computer Vision (WACV)}, pages 4621--4630, Los Alamitos, CA, USA, 2023. IEEE Computer Society.

\bibitem[Xu et~al.(2018)Xu, Evans, and Qi]{XU1}
W.~Xu, D.~Evans, and Y.~Qi.
\newblock Feature squeezing: Detecting adversarial examples in deep neural networks.
\newblock In \emph{Network and Distributed System Security Symposium}, 2018.

\bibitem[Xue et~al.(2023)Xue, Araujo, Hu, and Chen]{NEURIPS2023_088463cd}
H.~Xue, A.~Araujo, B.~Hu, and Y.~Chen.
\newblock Diffusion-based adversarial sample generation for improved stealthiness and controllability.
\newblock In \emph{Advances in Neural Information Processing Systems}, volume~36, pages 2894--2921. Curran Associates, Inc., 2023.

\bibitem[Yang et~al.(2019)Yang, Zhang, Katabi, and Xu]{pmlr-v97-yang19e}
Y.~Yang, G.~Zhang, D.~Katabi, and Z.~Xu.
\newblock {ME}-net: Towards effective adversarial robustness with matrix estimation.
\newblock In \emph{Proceedings of the 36th International Conference on Machine Learning}, volume~97 of \emph{Proceedings of Machine Learning Research}, pages 7025--7034. PMLR, 09--15 Jun 2019.

\bibitem[Ye and Zhu(2018)]{Ye2018}
N.~Ye and Z.~Zhu.
\newblock Bayesian adversarial learning.
\newblock In S.~Bengio, H.~Wallach, H.~Larochelle, K.~Grauman, N.~Cesa-Bianchi, and R.~Garnett, editors, \emph{Advances in Neural Information Processing Systems}, volume~31. Curran Associates, Inc., 2018.
\newblock URL \url{https://proceedings.neurips.cc/paper_files/paper/2018/file/586f9b4035e5997f77635b13cc04984c-Paper.pdf}.

\bibitem[Yun et~al.(2019)Yun, Han, Chun, Oh, Yoo, and Choe]{9008296}
S.~Yun, D.~Han, S.~Chun, S.~Oh, Y.~Yoo, and J.~Choe.
\newblock Cutmix: Regularization strategy to train strong classifiers with localizable features.
\newblock In \emph{2019 IEEE/CVF International Conference on Computer Vision (ICCV)}, pages 6022--6031, Los Alamitos, CA, USA, 2019. IEEE Computer Society.

\end{thebibliography}
\bibliographystyle{abbrvnat}

\vspace{0.5cm}

\noindent \textbf{Competing interests:} The authors declare none.
\end{document}